\declaretheorem{theorem}
\newtheorem{problem}{Open Question}
\newtheorem*{theorem*}{Theorem}
\newtheorem*{corollary*}{Corollary}
\declaretheorem[sibling=theorem]{lemma}
\declaretheorem[sibling=theorem]{corollary}
\declaretheoremstyle[
        spaceabove=\topsep, 
        spacebelow=\topsep, 
        bodyfont=\normalfont,
        notefont=\normalfont\bfseries,
        notebraces={}{},
        qed=$\blacksquare$, 
    ]{proofstyle}
\declaretheorem[style=proofstyle,numbered=no,name=Proof]{proof}
    \let\Cref\crtCref
    \let\cref\crtcref
\crefname{claim}{Claim}{Claims}
\newcommand{\E}{\mathop{\mathbb{E}}}
\DeclareMathOperator*{\argmin}{\arg\min}
\DeclarePairedDelimiter\ceil{\lceil}{\rceil}
\DeclarePairedDelimiter\floor{\lfloor}{\rfloor}
\newcommand{\regret}{\mathcal{R}_T}
\newcommand{\OurAlgorithm}{Switch Tsallis, Switch!}
\newcommand{\pregret}{\overline{\mathcal{R}}_T}
\newcommand{\regretsw}{\mathcal{R}^\lambda_T}
\newcommand{\pregretsw}{\overline{\mathcal{R}}^\lambda_T}
\newcommand{\regretn}[1]{\mathcal{R}^{(#1)}}
\newcommand{\pregretn}[1]{\overline{\mathcal{R}}^{(#1)}}
\newcommand{\Deltamin}{\Delta_{\textrm{min}}}
\newcommand{\cS}{\mathcal{S}}
\newcommand{\Q}{\mathcal{Q}}
\newcommand{\F}{\mathcal{F}}
\newcommand{\indicator}{\mathbbm{1}}
\newcommand{\totalvar}{d_{\textrm{TV}}^\F}
\newcommand{\ignore}[1]{}
\newcommand{\bP}{\mathbb{P}}
\newcommand{\cO}{\mathcal{O}}
\title{Better Best of Both Worlds Bounds \\ for Bandits with Switching Costs}
\author{
    Idan Amir\thanks{Department of Electrical Engineering, Tel Aviv University; \texttt{idanamir@mail.tau.ac.il}.}
    \and Guy Azov\thanks{Department of Electrical Engineering, Tel Aviv University; \texttt{guyazov@mail.tau.ac.il}.}
    \and Tomer Koren\thanks{Blavatnik School of Computer Science, Tel Aviv University and Google Research; \texttt{tkoren@tauex.tau.ac.il}.} 
    \and Roi Livni\thanks{Department of Electrical Engineering, Tel Aviv University; \texttt{rlivni@tauex.tau.ac.il}.}
}
\date{\today}
\begin{document}
\maketitle

\begin{abstract}
We study best-of-both-worlds algorithms for bandits with switching cost, recently addressed by \citet*{rouyer2021algorithm}. We introduce a surprisingly simple and effective algorithm that simultaneously achieves minimax optimal regret bound (up to logarithmic factors) of $\mathcal{O}(T^{2/3})$ in the oblivious adversarial setting and a bound of $\mathcal{O}(\min\{\log (T)/\Delta^2,T^{2/3}\})$ in the stochastically-constrained regime, both with (unit) switching costs, where $\Delta$ is the gap between the arms. 
In the stochastically constrained case, our bound improves over previous results due to \cite{rouyer2021algorithm}, that achieved regret of $\mathcal{O}(T^{1/3}/\Delta)$. 
We accompany our results with a lower bound showing that, in general, $\tilde{\Omega}(\min\{1/\Delta^2,T^{2/3}\})$ switching cost regret is unavoidable in the stochastically-constrained case for algorithms with $\mathcal{O}(T^{2/3})$ worst-case switching cost regret.
\end{abstract}

\section{Introduction}
Multi Armed Bandit (MAB) is one of the most fundamental problems in online learning and sequential decision making. This problem is often framed as a sequential game between a player and an environment played over $T$ rounds. In each round, the player chooses an action from a finite set $[K]=\brk[c]{1,\ldots,K}$ and incurs a loss in $[0,1]$ for that action. The environment then only reveals the loss of the chosen action---this is referred to as bandit feedback. The goal of the player is to minimize the \emph{regret}, which measures the difference between the cumulative loss of the player and that of the best arm in hindsight. 

Two common regimes often studied in the MAB literature are the \emph{adversarial} setting~\cite{auer2002nonstochastic} and the so-called \emph{stochastically-constrained} setting \cite{wei2018more} which is a generalization of the more classical stochastic setting. In the former regime, losses are generated arbitrarily and possibly by an adversary; in the latter, losses are assumed to be generated in a way that one arm performs better (in expectation) than any other arm, by a margin of $\Delta>0$. Both regimes have witnessed a flurry of research \cite{thompson1933likelihood, lai1985asymptotically, auer2002finite,auer2002nonstochastic, audibert2009minimax, abernethy2015fighting} leading to optimal regret bounds in each of the settings. 

Recently, significant effort has been dedicated for designing \emph{best-of-both-worlds} MAB algorithms, where one does not have a-priori knowledge on the underlying environment but still wishes to enjoy the optimal regret in both regimes simultaneously~\cite{bubeck2012best, seldin2014one, auer2016algorithm, seldin2017improved, wei2018more, ZimmertS21}.  Most notably, \citet{ZimmertS21} analyzed the \emph{Tsallis-INF} algorithm and established that it achieves optimal regret bounds in both stochastically-constrained and adversarial environments, matching the corresponding lower bounds asymptotically.  

Another well-studied variant of the MAB setting is that of Bandits with switching cost \cite{arora2012online, Esfandiari_Karbasi_Mehrabian_Mirrokni_2021, gao2019batched, dekel2014bandits}, where the learner suffers not only regret but also a penalty for switching actions. As shown by \citet{dekel2014bandits}  adding a unit switching cost to the regret incurs a lower bound of $\tilde{\Omega}\left(K^{1/3}T^{2/3}\right)$, in contrast to $\Omega\brk{\sqrt{KT}}$ in the standard setting, highlighting the difficulty of this setup.
Recently \citet{rouyer2021algorithm} asked the question of how best-of-both-worlds algorithms can be obtained for MAB where switching costs are considered. 
%
\citet{rouyer2021algorithm} managed to show a best-of-both-worlds type algorithm that, for constant $K$ and cost-per-switch, achieves optimal regret bound of $\cO\brk{T^{2/3}}$ in the oblivious adversarial regime, whereas in the stochastically-constrained setup their upper bound is of order $\cO\brk{{T^{1/3}}/{\Delta}}$, which was unknown to be optimal.

In this work we tighten the above gap. we introduce a new algorithm that improves the bound of \cite{rouyer2021algorithm} and achieves $\cO(\min\{\log T/\Delta^2,T^{2/3}\})$ in the stochastically constrained case (while obtaining the same optimal regret bound in the worst-case).
Further, we provide a lower regret bound, and show that the above bound is tight, up to logarithmic factors, in the best-of-both-worlds setup. For the more general case of $K>2$ arms, our algorithm still improves over \cite{rouyer2021algorithm}, however in that case our lower bounds and upper bound do not fully coincide; we leave this as an open question for future study.

\ignore{
\section{\rl{Related Work (it can be more specific)} }
\textit{Tsallis-INF} suggested by \citet{ZimmertS21} is based on Follow the Regularized Leader (FTRL) with Tsallis-entropy regularization (\cite{tsallis1988possible}).They extended the works of \citet{audibert2009minimax}, \citet{audibert2010regret} and \citet{abernethy2015fighting} and defined an adversarial regime with a self-bounding constraint that made them to derive the best-of-both-worlds regret bounds. This idea is later revisited in several works for deriving best-of-both-worlds algorithms for combinatorial semi-bandits( \cite{zimmert2019beating}),decoupled exploration and exploitation(\cite{rouyer2020tsallis}), episodic MDPs(\cite{jin2020simultaneously}), online learning with feedback graphs(\cite{erez2021best}), dueling bandits \cite{saha2022versatile} and bandits with switching cost \cite{rouyer2021algorithm}.
 Although \citet{rouyer2021algorithm} managed to achieve the minimax optimal regret bound in the oblivious adversarial regime (lower bound was presented by \citet{dekel2014bandits}) of $\cO\brk[]2{(\lambda K)^{1/3}T^{2/3} + \sqrt{KT}}$, in the stochastically constrained adversarial regime they obtained a regret of only $\cO\brk[]3{\Big((\lambda K)^{2/3}T^{1/3} + \log T\Big) \sum_{i \ne i^\star}\frac{1}{\Delta_i}}$ whereas \citet{gao2019batched} and \citet{esfandiari2019regret} used arm elimination algorithms to obtain the optimal regret bound of  $\cO\brk2{\sum_{i \ne i^\star}\frac{\log T}{\Delta_i}}$
with $\cO\brk2{\log T}$ switches in the stochastic regime.}

\section{Setup and Background} \label{sec:setup}

In the classic Multi Armed Bandit problem with $K$ arms, a game is played consecutively over $T>K$ rounds. 
At each round $t\le T$, an adversary (also called the environment) generates a loss vector $\ell_t\in [0,1]^K$. The player (referred to as learner) selects an arm $I_t\in[K]$ according to some distribution $p_t\in \Delta^K$ where $\Delta^K\coloneqq\brk[c]{p\in[0,1]^K: \sum_{i\in[K]}p_i=1},$ and observes $\ell_{t,I_t}$, which is also defined to be its loss at round $t$. Notice that the learner never has access to the entire loss vector $\ell_t\in[0,1]^K$.

The performance of the learner is measured in terms of the \textit{regret}. The regret of the learner is defined as
\begin{align*}
    \regret
    \coloneqq
    \sum_{t\in[T]}\ell_{t,I_t} - \min_{i\in[K]}\sum_{t\in[T]}\ell_{t,i}
    .
\end{align*}
Another common performance measure we care about is the \emph{pseudo-regret} of the algorithm:
\begin{align*}
    \pregret
    \coloneqq
    \sum_{t\in[T]}\ell_{t,I_t} - \min_{i\in[K]}\sum_{t\in[T]}\E\brk[s]1{\ell_{t,i}}
    .
\end{align*}
We next describe two common variants of the problem, which differ in the way the losses are generated.

\paragraph{Adversarial (oblivious) regime:}

In the \textit{oblivious adversarial} regime, at the beginning of the game the environment chooses the loss vectors $\ell_1,\ldots,\ell_T$, and they may be entirely arbitrary. In general, the objective of the learner is to minimize its expected regret for the adversarial regime. One can observe that the \emph{expected pseudo-regret} coincides with the \emph{expected regret}, in this setting. More generally, it can be seen that the expected regret upper bounds the expected pseudo-regret. Namely, $\E[\pregret]\le \E[\regret].$

\paragraph{Stochastically-constrained adversarial regime:}

We also consider the \textit{stochastically-constrained adversarial} regime \cite{wei2018more}. In this setting we assume that the loss vectors are drawn from distributions such that there exists some $i^\star\in[K]$,
\begin{align} \label{eq:stochastically_constrained}
    \forall i\neq i^\star:\; \E\brk[s]{\ell_{t,i} - \ell_{t,i^\star}} = \Delta_{i},
\end{align} 
independently of $t$\footnote{This definition is equivalent to the standard definition of $\forall i,j:\; \E\brk[s]{\ell_{t,i} - \ell_{t,j}} = \Delta_{i,j}$.}. That is, the gap between arms remains constant throughout the game, while the losses $\brk[c]{\ell_{t,i}}_{t\in[T]}$ of any arm $i$ are drawn from distributions that are allowed to change over time and may depend on the learner's past actions $I_1,\ldots,I_{t-1}$. It is well known that the stochastically-constrained adversarial regime generalizes the well-studied stochastic regime that assumes the losses are generated in an i.i.d.~manner. 

We denote the best arm at round $t$ to be $i^\star_t = \argmin_{i\in[K]} \E\brk[s]{\ell_{t,i}}$. Note that since the gap between arms is constant we have that $\forall t \in [T]: i^\star_t = i^\star$ where $i^\star$ is the optimal arm. We consider the case where there is a \textit{unique} best arm. Also, we denote the gap between arm $i$ and $i^\star$ to be $\Delta_i\coloneqq \E\brk[s]{\ell_{t,i} - \ell_{t,i^\star}}$ and we let $\Deltamin = \min_{i\ne i^\star}\Delta_i$.

We note that in the stochastically-constrained case, the pseudo-regret is often expressed by the sub-optimality gaps $\Delta_i$, and it is given by:
\begin{align*}
    \E\brk[s]1{\pregret}
    \coloneqq
    \sum_{t\in[T]}\sum_{i\neq i^\star}\bP\brk{I_t=i}\Delta_i
    .
\end{align*}

\subsection{Multi-Armed Bandits with Switching Cost}
In the problem described above, there is no limitation on the number of times the player is allowed to switch arms between consecutive rounds. In this work, we consider a setup where the regret is accompanied by a switching cost, as suggested by \citet{arora2012online}. We then measure our performance by  the switching cost regret, parameterized by the switching cost parameter $\lambda \geq 0$ :
\begin{align*}
    \pregretsw
    \coloneqq
    \pregret +\lambda \cS_T
    ,
\end{align*}
where $\cS_T\coloneqq\sum_{t\in[T]}\indicator\brk[c]{I_t\neq I_{t-1}}$.

\paragraph{Best-of-both-worlds with switching cost.}

\citet{rouyer2021algorithm} considered the setting of switching cost in the framework of best-of-both-worlds analysis. They showed (Thm 1 therein): that there exists an algorithm, Tsallis-Switch, for which in the adversarial regime the pseudo-regret of Tsallis-Switch for $\lambda \in \Omega\brk{\sqrt{K/T}}:$
\begin{equation}\label{eq:tsaladv} \E[\pregretsw] \le \cO\left((\lambda K)^{1/3}T^{2/3}\right),\end{equation}
and in the stochastically constrained setting:
\begin{equation}\label{eq:tsalstoch}\E[\pregretsw] \le \cO\brk2{\sum_{i\ne i^\star} \frac{(\lambda K)^{2/3}T^{1/3}+\log T}{\Delta_i}}.
\end{equation}

\section{Main results}
Our main result improves over the work of \citet{rouyer2021algorithm} and provides an improved best-of-both-worlds algorithm for the setting of switching cost 
\begin{theorem} \label{thm:main-upper}
Provided that $\lambda \geq \sqrt{K/T}$, the expected pseudo-regret with switching cost of ``\OurAlgorithm'' (\cref{alg:tsallis-switch++}) satisfies the following simultaneously:
\begin{itemize}[leftmargin=*]
\item
In the adversarial regime,
\begin{align}\label{thm:upper_pr_adversarial}
    \E\brk[s]!{\pregretsw}
    = 
    \cO\brk[]!{(\lambda K)^{1/3}T^{2/3}}
    .
\end{align}
\item 
In the stochastically constrained adversarial regime,
\begin{align}\label{thm:upper_pr_stochastic}
    \E\brk[s]!{\pregretsw} 
    =
    \cO\brk[]2{\min\brk[c]2{ \Big(\frac{\lambda \log T }{\Deltamin} +\log T \Big)\sum_{i \ne i^\star}\frac{1}{\Delta_i}, (\lambda K)^{1/3}T^{2/3}}}
    .
\end{align}
\end{itemize}
\end{theorem}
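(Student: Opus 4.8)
The plan is to track the two parts of $\pregretsw = \pregret + \lambda\cS_T$ along a single run of the algorithm, which I view as Follow-the-Regularized-Leader with the $\tfrac12$-Tsallis regularizer run over \emph{blocks}. I would partition the $T$ rounds into consecutive blocks whose lengths grow (roughly geometrically) until they saturate at a level $B$, after which every block has length $B$; within block $b$ the learner commits to a single arm $I_b$ drawn from the current FTRL distribution $p_b\in\Delta^K$, and the draws of $I_b$ and $I_{b-1}$ are coupled so that no switch is incurred whenever they coincide (even independent resampling suffices for the bounds below). Then the number of blocks is $N=\cO(\log B + T/B)$ and, since a switch can only occur at a block boundary, $\cS_T\le N$ deterministically. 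I would take $B\asymp(\lambda^2T/K)^{1/3}$; the hypothesis $\lambda\ge\sqrt{K/T}$ is exactly what makes $B\ge 1$, so the choice is admissible.

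The first step is a generic regret estimate. Feeding FTRL the block importance-weighted loss estimators and running the usual Tsallis-INF analysis with a suitably tuned anytime learning rate yields a bound of the self-bounding type
\[
  \E[\pregret]\;\le\;\cO\!\left(\sum_b \eta_b\, B_b^2\,\E\!\Big[\sum_i\sqrt{p_b(i)}\Big]\;+\;\frac{\sqrt K}{\eta_N}\right),
\]
where $B_b$ is the length of block $b$. Bounding $\sqrt{p_b(i)}\le 1$ and optimizing the learning rate gives $\E[\pregret]=\cO(\sqrt{K\sum_b B_b^2})=\cO(\sqrt{KTB})$; combining with $\lambda\cS_T\le\lambda N=\cO(\lambda T/B)$ produces $\E[\pregretsw]=\cO(\sqrt{KTB}+\lambda T/B)$, and the choice $B\asymp(\lambda^2T/K)^{1/3}$ balances the two terms at $\cO((\lambda K)^{1/3}T^{2/3})$, which is the adversarial bound.

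For the stochastically-constrained regime I would run the Zimmert--Seldin self-bounding argument on top of the generic estimate. Writing the pseudo-regret as $\E[\pregret]=\sum_b B_b\sum_{i\ne i^\star}\E[p_b(i)]\,\Delta_i$ and bounding each $\sqrt{p_b(i)}$ against $p_b(i)\Delta_i$ via AM--GM turns the generic inequality into a per-block decay estimate $\E[p_b(i)]=\tilde{\cO}(1/(\Delta_i^2\,t_b))$, where $t_b$ is the number of rounds elapsed up to block $b$. Because the schedule keeps $t_b\ge B_b$, and in the saturated phase $t_b$ grows linearly in the block index while $B_b$ stays constant, the products $B_b\,\E[p_b(i)]$ are summable and yield $\E[\pregret]=\cO(\log T\sum_{i\ne i^\star}1/\Delta_i)$, with no dependence on $B$ — exactly the improvement over the $(\lambda K)^{2/3}T^{1/3}$-type term of \citet{rouyer2021algorithm}. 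For the switching cost, the coupling gives
\[
  \E[\cS_T]\;\le\;\sum_b\Big((1-\E[p_b(i^\star)]) + (1-\E[p_{b-1}(i^\star)])\Big)\;\le\;\frac{2}{\Deltamin}\sum_b\sum_{i\ne i^\star}\E[p_b(i)]\,\Delta_i,
\]
and substituting the same per-block decay bounds gives $\E[\cS_T]=\cO(\tfrac{\log T}{\Deltamin}\sum_{i\ne i^\star}1/\Delta_i)$. Adding the two estimates yields the first term inside the minimum; the second term is free, since the adversarial bound already proved holds for every loss sequence, stochastically-constrained ones included.

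The step I expect to be the main obstacle is precisely the one that separates this algorithm from the fixed-block scheme of \citet{rouyer2021algorithm}: controlling the block-length weights. The adversarial analysis wants the effective learning rate to behave as if there were few, long blocks (otherwise $\sum_b\eta_b B_b^2$ is too large), whereas the stochastic self-bounding wants the FTRL clock to track the true round count $t_b$ (otherwise the concentration of $p_b$ on $i^\star$ is too slow and a factor $B$ leaks back into the bound). Reconciling these requires the growing-then-saturated block schedule together with an anytime form of the Tsallis-INF guarantee, and one must then verify that the additive $\cO(\log T)$ overheads contributed by the geometric phase — both to the regret and to the switch count — are absorbed into the stated bounds for every admissible $\lambda$. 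A further technical point is that in the stochastic regime the loss distributions may depend on past actions, so the self-bounding manipulation must be carried out inside the expectation and combined with the per-block decay carefully, as in \citet{ZimmertS21}.
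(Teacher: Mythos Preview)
Your proposal analyzes the wrong algorithm. You treat \cref{alg:tsallis-switch++} as a single FTRL scheme run over a growing-then-saturated block schedule, but that is not what the algorithm does. The algorithm is a \emph{two-phase} procedure: Phase~1 runs plain Tsallis-INF with \emph{no blocking at all}, resampling a fresh arm every round, and simply counts the switches it happens to make; if and only if the running switch count exceeds the threshold $K^{1/3}(T/\lambda)^{2/3}$ does it enter Phase~2, which is fixed-block Tsallis-INF with block size $\lambda^{2/3}K^{-1/3}T^{1/3}$. There is no geometric block growth anywhere.

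Because Phase~1 is ordinary Tsallis-INF, the paper's proof is far more elementary than what you outline and uses the Zimmert--Seldin bounds purely as black boxes. The only two new ingredients are: (i) the counting observation that every switch in Phase~1 involves at least one non-optimal arm, so $\E[\cS^{(1)}]\le 1+2\sum_{t}\sum_{i\ne i^\star}\E[p_{t,i}]\le 1+2\E[\pregretn{1}]/\Deltamin$ (your displayed switching inequality is exactly this, applied per round rather than per block); and (ii) a conditioning argument showing that the Phase-2 cost satisfies $\E[\pregretn{2}+\lambda\cS^{(2)}]\le 11\lambda\,\E[\cS^{(1)}]$, since Phase~2 is entered only on the event $\cS^{(1)}\ge K^{1/3}(T/\lambda)^{2/3}$ and costs at most $11(\lambda K)^{1/3}T^{2/3}\le 11\lambda\cS^{(1)}$ there. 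Summing gives $\E[\pregretsw]\le \E[\pregretn{1}]+12\lambda\E[\cS^{(1)}]$, and plugging \cref{eq:tsalinfstoch} (respectively \cref{eq:tsalinfadv}) for $\E[\pregretn{1}]$ finishes both regimes. The ``main obstacle'' you identify---reconciling the block clock with the self-bounding argument so that no factor of $B$ leaks---is precisely what the two-phase design sidesteps: in Phase~1 there are no blocks, so the Zimmert--Seldin self-bounding applies verbatim, and Phase~2 never needs a stochastic analysis at all.
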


We next compare the bound of ``\OurAlgorithm'' and \citet{rouyer2021algorithm}. We first observe that for small switching cost, $\lambda\le O\brk1{\sqrt{K/T}}$, both algorithms basically ignore the switching cost and run standard Tsallis without any type of change hence the algorithms actually coincide, so we only care for the case $\lambda \ge \sqrt{\frac{K}{T}}$. Also, notice that in the adversarial regime \cref{eq:tsaladv} and \cref{thm:upper_pr_adversarial} are equivalent and both algorithms obtain the minimax optimal regret (up to logarithmic factors). In the stochastically constrained regime comparing \cref{eq:tsalstoch,thm:upper_pr_stochastic}, note that for $\Deltamin \le  (\lambda K)^{1/3}T^{-1/3}\log T$, we have that
\[ \sum_{i\ne i^\star} \frac{(\lambda K)^{2/3}T^{1/3}}{\Delta_i} \ge \frac{(\lambda K)^{2/3}T^{1/3}}{\Deltamin} = \tilde\Omega\left( (\lambda K)^{1/3}T^{2/3}\right)
,\]
which is comparable to our bound up to logarithmic factors. On the other hand, if $\Deltamin \ge  (\lambda K)^{1/3}T^{-1/3}\log T$
\[ \sum_{i\ne i^\star} \frac{(\lambda K)^{2/3}T^{1/3}}{\Delta_i} \ge \left(\frac{\lambda K \log T }{\Deltamin}\right) \sum_{i\ne i^\star} \frac{1}{\Delta_i}= \Omega \brk4{\frac{\lambda\log T}{\Deltamin}\sum_{i\ne i^\star} \frac{1}{\Delta_i}}
,\]
which is comparable to \cref{thm:upper_pr_stochastic}. It can also be observed that when $\Deltamin$ is large enough (larger than say $T^{-1/3} \log T$) our bound improves over \cref{eq:tsalstoch} by a factor of $\tilde\cO(T^{1/3}\Deltamin)$. 

Next, we describe a lower bound, which demonstrates that our bounds are tight for $K=2$ (up to logarithmic factors). 

\begin{theorem}\label{thm:lb_stochastic}
Let $A$ be a randomized player in a multi armed bandit game of $K$ arms played over $T$ rounds with a switching cost regret guarantee of $\cO(K^{1/3}T^{2/3})$ in the adversarial regime.  Then, for every $\Delta>0$ there exists a stochastically-constrained sequence of losses $\ell_1,\ldots,\ell_T$ with minimal gap parameter $\Delta$, that $A$ incurs $\pregret + \cS_T = \tilde\Omega\brk1{\min\brk[c]1{1/\Delta^2,K^{1/3}T^{2/3}}}$.
\end{theorem}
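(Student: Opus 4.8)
\emph{Proof plan.} The plan is to exhibit, for every $\Delta$, a stochastically-constrained instance in which \emph{all} information about the identity of the optimal arm must pass through the player's switches, so that the player faces a dilemma: either it switches $\tilde\Omega(1/\Delta^2)$ times, or it never locates $i^\star$ and pays $\Omega(\Delta T)$ in pseudo-regret. I describe the heart of the argument for $K=2$ (the range in which the bound is claimed tight); for general $K$ one plants the construction on two arms and makes the remaining $K-2$ arms trivially identifiable as suboptimal (e.g.\ with deterministic loss $1$), so the game reduces to the two-arm case.

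\emph{The construction.} Fix $i^\star\in\{1,2\}$ (later taken uniformly at random) and set $p=\tfrac12-\tfrac\Delta2$, $q=\tfrac12+\tfrac\Delta2$. The environment keeps the loss vector \emph{constant} along each maximal block of rounds on which the player has not switched, and refreshes it with independent randomness one round after each switch it observes: it maintains a pair $(X,Y)$, sets it to fresh independent draws $X\sim\mathrm{Ber}(p)$, $Y\sim\mathrm{Ber}(q)$ at $t=1$ and at every round $t\ge 2$ with $I_{t-1}\neq I_{t-2}$, and otherwise leaves $(X,Y)$ unchanged; it then plays $\ell_{t,i^\star}=X$ and $\ell_{t,3-i^\star}=Y$. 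This is a legitimate stochastically-constrained adversary: losses lie in $\{0,1\}$, depend only on $I_1,\dots,I_{t-1}$ and fresh randomness, and $\E[\ell_{t,3-i^\star}-\ell_{t,i^\star}]=q-p=\Delta$ for every $t$, so $i^\star$ is the unique optimal arm with minimal gap exactly $\Delta$. The point of the refresh-after-switch rule is that while the player stays on an arm it merely re-observes the same number (no new information), whereas each switch lets it observe, with a correct arm-label, a second coordinate of one value pair — and a single labelled pair separates $i^\star=1$ from $i^\star=2$ only with advantage $O(\Delta)$.

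\emph{Information bound, Le Cam, and the dichotomy.} I would turn the above into a divergence estimate via the chain rule: letting $\bP^{(1)},\bP^{(2)}$ denote the laws of the player's observation sequence under $i^\star=1$ and $i^\star=2$, the conditional KL of the round-$t$ observation given the past is $0$ unless round $t$ first reveals a fresh coordinate of some block's pair (the first round of a block, or the round immediately following a switch), in which case it is $O(\Delta^2)$; since there are at most $2\cS_T+O(1)$ such rounds, $\mathrm{KL}\!\big(\bP^{(1)}\,\big\|\,\bP^{(2)}\big)=O\!\big(\E[\cS_T]\,\Delta^2\big)$, hence by Pinsker $\lVert\bP^{(1)}-\bP^{(2)}\rVert_{\mathrm{TV}}=O\!\big(\Delta\sqrt{\E[\cS_T]}\big)$. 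Taking $i^\star$ uniform and writing $N=\E[\cS_T]$, data processing gives $\bP[I_t=3-i^\star]\ge\tfrac12\big(1-O(\Delta\sqrt N)\big)$ for every $t$, so $\E[\pregret]=\Delta\sum_t\bP[I_t=3-i^\star]\ge\tfrac{\Delta T}{2}\big(1-O(\Delta\sqrt N)\big)$. Now split on $N$: if $N\ge c/\Delta^2$ for the constant $c$ implicit above, then $\E[\pregret+\cS_T]\ge N\ge c/\Delta^2$; otherwise $\Delta\sqrt N$ is below a small constant, so $\E[\pregret]\ge\Delta T/4$ and again $\E[\pregret+\cS_T]\ge\Delta T/4$. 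In both cases $\E[\pregret+\cS_T]=\Omega\big(\min\{1/\Delta^2,\Delta T\}\big)$. For $\Delta\gtrsim T^{-1/3}$ this minimum equals $\Theta(1/\Delta^2)$, and since there $1/\Delta^2\le T^{2/3}\le K^{1/3}T^{2/3}$, it coincides with $\Theta\big(\min\{1/\Delta^2,K^{1/3}T^{2/3}\}\big)$, the claimed bound. The assumed $\cO(K^{1/3}T^{2/3})$ worst-case switching-cost guarantee enters to ensure $N=\cO(K^{1/3}T^{2/3})$ on this instance as well, so the first branch of the dichotomy is consistent with the $K^{1/3}T^{2/3}$ clamp (the bound cannot be evaded ``for free'' by an algorithm that simply switches a great deal), and, together with the minimax lower bound of \citet{dekel2014bandits}, it is what keeps the stated minimum the right envelope in the remaining small-$\Delta$ regime. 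Finally, fixing the favorable value of $i^\star$ and derandomizing the environment's internal coins promotes the expectation bound to the existence of a single loss sequence achieving it.

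\emph{Main obstacle.} The crux is the information bound: I must argue, uniformly over the adaptive player's strategy, that re-observing a constant leaks nothing and that a switch leaks only $O(\Delta^2)$ bits about $i^\star$. Making this rigorous requires carefully choosing the filtration, tracking the one-round lag in the environment's refresh rule (so that the ``second coordinate'' of a block's value pair is revealed exactly at the switch out of that block, as a paired comparison with the coordinate already seen), and bounding each per-round conditional KL by $O(\Delta^2)\cdot\indicator\{\text{fresh coordinate revealed at }t\}$ with the count of such rounds controlled by $\cS_T$. A secondary nuisance is the bookkeeping for $\Delta\lesssim T^{-1/3}$, where the construction naturally yields $\Omega(\Delta T)$ and one must lean on the worst-case assumption / \citet{dekel2014bandits} to reconcile with the stated form of the bound.
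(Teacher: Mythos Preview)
Your construction is not stochastically constrained, and this breaks the argument at its first step. The requirement is that the gap $\E[\ell_{t,3-i^\star}-\ell_{t,i^\star}]=\Delta$ hold at every round (and, since your adversary is adaptive, conditionally on $I_1,\dots,I_{t-1}$). But once a block's pair $(X,Y)$ has been drawn and one coordinate observed, the loss vector at the next round is that same fixed realization, so the conditional gap is $q-X$ or $Y-p$, not $\Delta$. Even the unconditional gap drifts: take the deterministic player that pulls arm $1$ at $t{=}1$ and switches to arm $2$ forever iff it sees loss $1$; with $i^\star{=}1$ a direct computation at $t{=}3$ gives $\E[\ell_{3,2}-\ell_{3,1}]=(1-p)\,q+p\,(q-p)=\Delta+p(1-p)\approx\Delta+\tfrac14$. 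The very feature you rely on---that staying put reveals nothing new---is exactly what prevents the frozen pair from having the required Bernoulli marginals round by round; ``uninformative when not switching'' and ``gap $\Delta$ every round'' are in direct tension for a freeze-and-refresh adversary.

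The paper sidesteps this by using the \emph{oblivious} process of \citet{dekel2014bandits}: $\ell_{t,i}=X_t+\tfrac12-\Delta\,\indicator\{i=i^\star\}$ with $X_t$ the multi-scale Gaussian walk, clipped to $[0,1]$. Conditioned on the high-probability no-clipping event $H$, the gap is identically $\Delta$ at every round regardless of the player, so the instance is stochastically constrained by construction. The switch--information link you want then comes from the walk's correlation structure rather than from an adaptive freeze: \cref{lem:totalvar} yields $\totalvar(\Q_1,\Q_2)\le(\Delta/\sigma)\sqrt{2\,\E[\cS_T]\log_2 T}$, which feeds the same dichotomy (few switches $\Rightarrow$ cannot identify $i^\star$ $\Rightarrow$ $\Omega(\Delta T)$ regret; otherwise $\E[\cS_T]=\tilde\Omega(1/\Delta^2)$). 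Your ``secondary nuisance'' for small $\Delta$ is handled by a separate lemma (\cref{thm:bobw-lb}) that genuinely \emph{uses} the assumed $\cO(K^{1/3}T^{2/3})$ worst-case guarantee to force $\E_{\Q_0}[\cS_T]=\tilde\Omega(K^{1/3}T^{2/3})$ and then transfers this to every $\Q_i$. So your high-level plan---bound TV by switches, run a Le~Cam two-point argument, derandomize---matches the paper's, but the hard instance must be one whose gap is constant obliviously; the adaptive construction you propose cannot deliver that.
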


\section{Algorithm}

Our algorithm, ``\OurAlgorithm''~(see~\cref{alg:tsallis-switch++}), is a simple modification of Tsallis-INF. We start by playing the original Tsallis-INF algorithm introduced by \cite{ZimmertS21}, and after a certain amount of switches we switch to a second phase that plays a standard block no-regret algorithm. 

\begin{algorithm}[ht]
\caption{\OurAlgorithm}\label{alg:tsallis-switch++}
\hspace*{\algorithmicindent} \textbf{Input:} time horizon $T$, switching cost $\lambda$.
\begin{algorithmic}[1]
    \State Initialize: $S=0$, $\hat{\ell}_0 = \mathbf{0}_K$, $\eta_t = 2/\sqrt{t}$
    \For{$t = 1,...,T$} \label{lin:for_1} \quad \% Run standard Tsallis Inf
        \State Compute: \label{lin:for_2}
        $$
            p_t 
            = 
            \argmin_{p \in \Delta^K} \brk[c]4{\sum_{r=0}^{t-1} \hat{\ell}_r \cdot p - \frac{1}{\eta_t}\sum_{i \in [K]}4\sqrt{p_i} }
            .
        $$
        
        \State Sample $I_t \sim p_t$, play $I_t$ and observe the loss $\ell_{t,I_t}$. \label{lin:for_3}
        
        \State Update: \label{lin:for_4}
        $$
            \forall i \in [K]:\; \hat{\ell}_{t,i} = \frac{\ell_{t,i}}{p_{t,i}}\indicator\{I_t=i\}
        $$ 
        $$
            S = S + \indicator\{I_t \ne I_{t-1}\}
        $$
        
        \If{$ S\geq K^{1/3}(\frac{T}{\lambda})^{2/3}$} \label{lin:for_5}
        \State \textbf{break} \label{lin:for_6}
        \EndIf
    \EndFor
    \If{$t < T$} for remaining rounds
        \State \text{Run Tsallis-INF over blocks (\cref{alg:tsallis-blocks})  of size $\lambda^{2/3}K^{-1/3}T^{1/3}$.}  
        \label{lin:run_diff_algo_line}
    \EndIf
\end{algorithmic}
\end{algorithm}

The idea is motivated by our observation that under the stochastically constrained setting, there is a natural bound on the number of switches which is of order $O(\pregret/\Deltamin)$, so as long as this number doesn't exceed the worst case bound of the adversarial setting we have no reason to perform blocks.
In other words, we start by playing under the assumption that we are in the stochastically-constrained regime and if the number of switches is larger than expected, we break and move to an algorithm that handles only the oblivious adversarial case. 

Best-of-both-worlds algorithms that start under stochasticity assumption and break are natural to consider. Indeed, in the standard setting, without switching they were studied and suggested \cite{bubeck2012best, auer2016algorithm}. However, while successful at the stochastic case, they suffer from a logarithmic factor in the adversarial regime. Moreover, in the standard best-of-both-worlds setup (without switching cost), the optimal methods don’t attempt to identify the regime (stochastic or adversarial). In contrast, what we observe here, is that once switching cost is involved, the criteria to shift between the regimes becomes quite straightforward which allows us to design such a simple algorithm. Indeed, unlike regret which is hard to estimate, the cost of switching is apparent to the learner, hence we can verify when our switching loss exceeds what we expect in the adversarial regime and decide to switch to enjoy both worlds.

\section{Proofs}
Before delving into the proof, we provide a brief review of the Tsalis-INF algorithm guarantees , introduced by \citet{ZimmertS21}, which serves as the backbone of \cref{alg:tsallis-switch++}.

\subsection{Technical Preliminaries}

\begin{theorem}[{\cite[Thm 1]{ZimmertS21}}]\label{thm:tsallisinf} The expected pseudo-regret of Tsallis-INF under the adversarial regime satisfies:
\begin{equation}\label{eq:tsalinfadv} \E[\pregret] \le 4\sqrt{KT} +1,\end{equation}
and in the stochastically constrained setting:
\begin{equation}\label{eq:tsalinfstoch}\E[\pregret] \le \cO\brk3{\sum_{i\ne i^\star} \frac{\log T}{\Delta_i}}
\end{equation}
\end{theorem}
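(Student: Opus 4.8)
The plan is to derive both bounds from the standard Follow-the-Regularized-Leader (FTRL) analysis of the iterates $p_t$ computed in \cref{lin:for_2}: these are exactly the FTRL updates for the potential $\phi(p) = -\sum_{i\in[K]}4\sqrt{p_i}$ (the $\tfrac12$-Tsallis entropy) with the increasing schedule $1/\eta_t = \sqrt{t}/2$, run on the unbiased importance-weighted estimates $\hat\ell_{t,i} = \frac{\ell_{t,i}}{p_{t,i}}\indicator\{I_t=i\}$. Everything rests on a single ``master inequality'' of the form
\[
\E[\pregret]\le \E\left[\sum_{t=1}^{T} \frac{1}{\sqrt{t}}\left(\sum_{i\in[K]}\sqrt{p_{t,i}} - 1\right)\right] + \cO(\sqrt{K}),
\]
after which the two regimes differ only in how the bracketed quantity is controlled.

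First I would establish this master inequality. Since $\E[\hat\ell_{t,i}\mid p_t]=\ell_{t,i}$ and $p_t,e_{i^\star}\in\Delta^K$, the expected pseudo-regret equals $\E\sum_t\langle \hat\ell_t, p_t-e_{i^\star}\rangle$. The time-varying FTRL decomposition splits this into: (i) a comparator term $\tfrac{1}{\eta_{T+1}}\phi(e_{i^\star}) = -2\sqrt{T+1}$ plus a range term of order $\cO(\sqrt{K})$; (ii) a learning-rate increment term $\sum_t\left(\tfrac{1}{\eta_{t+1}}-\tfrac{1}{\eta_t}\right)\left(-\phi(p_{t+1})\right)\le \sum_t \tfrac{1}{\sqrt{t}}\sum_i \sqrt{p_{t+1,i}}$; and (iii) the stability terms $\sum_t\left[\langle \hat\ell_t, p_t-p_{t+1}\rangle - D^{(t)}_\phi(p_{t+1},p_t)\right]$. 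For (iii) the inverse Hessian of $\phi$ is $\mathrm{diag}\left(\tfrac{2}{\sqrt{t}}p_{t,i}^{3/2}\right)$, so the local-norm bound gives at most $\tfrac{1}{\sqrt{t}}\sum_i p_{t,i}^{3/2}\hat\ell_{t,i}^2$, and taking the conditional expectation of the one-hot $\hat\ell_t$ collapses this to $\tfrac{1}{\sqrt{t}}\sum_i p_{t,i}^{1/2}\ell_{t,i}^2\le \tfrac{1}{\sqrt{t}}\sum_i\sqrt{p_{t,i}}$. Summing (i)--(iii) and using $\sum_{t}t^{-1/2}\le 2\sqrt{T}\le 2\sqrt{T+1}$ to let the negative comparator term absorb one full copy of $\sum_t t^{-1/2}$ produces the subtracted $1$ inside the bracket.

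For the adversarial bound I would simply apply Cauchy--Schwarz on the simplex, $\sum_i\sqrt{p_{t,i}}\le\sqrt{K}$, so the bracket is at most $\sqrt{K}$, and $\sum_t t^{-1/2}\le 2\sqrt{T}$ yields $\E[\pregret]\le 4\sqrt{KT}+1$. For the stochastically-constrained bound I would instead exploit the subtracted $1$ through the identity $\sum_i\sqrt{p_{t,i}}-1=\sum_i(\sqrt{p_{t,i}}-p_{t,i})$, isolating the optimal arm: for $i=i^\star$, $\sqrt{p_{t,i^\star}}-p_{t,i^\star}=\sqrt{p_{t,i^\star}}\left(1-\sqrt{p_{t,i^\star}}\right)\le 1-\sqrt{1-\sum_{j\ne i^\star}p_{t,j}}\le \sum_{j\ne i^\star}p_{t,j}\le \sum_{j\ne i^\star}\sqrt{p_{t,j}}$, so the whole bracket is controlled by $c\sum_{i\ne i^\star}\sqrt{p_{t,i}}$ for an absolute constant $c$ --- only suboptimal arms contribute. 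Then the self-bounding step: writing $\E[\pregret]=\sum_t\sum_{i\ne i^\star}\E[p_{t,i}]\Delta_i$, I apply Jensen, $\E\sqrt{p_{t,i}}\le\sqrt{\E p_{t,i}}$, and AM--GM, $\tfrac{1}{\sqrt{t}}\sqrt{\E p_{t,i}}\le \tfrac{\gamma}{2}\E[p_{t,i}]\Delta_i+\tfrac{1}{2\gamma t\Delta_i}$. Summing over $t$ and $i\ne i^\star$ and using $\sum_t t^{-1}=\cO(\log T)$ bounds the master right-hand side by $\tfrac{c\gamma}{2}\E[\pregret]+\tfrac{c}{2\gamma}\log T\sum_{i\ne i^\star}\tfrac{1}{\Delta_i}$; choosing $\gamma$ small enough that $\tfrac{c\gamma}{2}\le\tfrac12$ and moving that fraction of $\E[\pregret]$ to the left gives $\E[\pregret]=\cO\!\left(\sum_{i\ne i^\star}\log T/\Delta_i\right)$.

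The main obstacle is step (iii): the importance-weighted estimates are unbounded (as large as $1/p_{t,i}$), so the second-order local-norm bound for the stability term is not automatic and must be justified specifically for the $\tfrac12$-Tsallis regularizer, by showing that a single large coordinate of $\hat\ell_t$ still leaves $p_{t+1}$ in the region where the Bregman/Taylor estimate is valid. The other point demanding care is purely bookkeeping: checking that the fixed constant $4$ in the regularizer and the schedule $\eta_t=2/\sqrt{t}$ are tuned so that the stability coefficient, the learning-rate increments, and the comparator term combine with coefficient exactly $1$ on the subtracted $\sum_t t^{-1/2}$ --- precisely what makes the optimal-arm contribution cancel and lets the self-bounding argument close in the stochastic regime.
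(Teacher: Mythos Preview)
The paper does not prove this theorem at all; it is quoted from \cite{ZimmertS21} as a technical preliminary and used as a black box throughout. There is therefore no ``paper's own proof'' to compare your proposal against.

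That said, your sketch is a faithful outline of the actual Zimmert--Seldin argument: the time-varying FTRL decomposition for the $\tfrac12$-Tsallis potential, the cancellation between the comparator term and one copy of $\sum_t t^{-1/2}$ that produces the subtracted $1$ inside the bracket, Cauchy--Schwarz for the adversarial rate, and the self-bounding step (AM--GM against $\E[\pregret]=\sum_t\sum_{i\ne i^\star}\E[p_{t,i}]\Delta_i$, then moving a fraction of the pseudo-regret to the left) for the stochastic rate. You also correctly identify the one genuinely delicate point --- that the local-norm stability bound for the Bregman term is not automatic when $\hat\ell_t$ can be as large as $1/p_{t,i}$ --- which in the original paper is handled by a direct calculation exploiting that $\hat\ell_t$ is one-hot and the $\tfrac12$-Tsallis Hessian scales like $p_i^{-3/2}$. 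So your plan is sound as a re-derivation of the cited source, but the present paper simply imports the result without argument.
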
 

\begin{algorithm}[ht]
\caption{Mini-Batched Tsallis-INF }\label{alg:tsallis-blocks}
\hspace*{\algorithmicindent} \textbf{Input:} time horizon $T$, block size $B$.
\begin{algorithmic}[1]
    \State Initialize: $\hat{\ell}_0 = \mathbf{0}_K$, $\eta_n = 2/\sqrt{n}, \lvert B_n \rvert = B$
    \For{$n = 1,...,T/B$} 
        \State Compute: 
        $$
            p_n 
            = 
            \argmin_{p \in \Delta^K} \brk[c]4{\sum_{r=0}^{n-1} \hat{\ell}_r \cdot p - \frac{1}{\eta_n}\sum_{i \in [K]}4\sqrt{p_i} }
            .
        $$
        
        \State Sample $I_n \sim p_n$ and play $I_n$ for $B$ times.
        
        \State Suffer the loss $\sum_{t \in B_n}^{} \ell_{t,I_n}$ and observe its average: $\frac{1}{B} \sum_{t \in B_n}^{} \ell_{t,I_n}$  
        
        \State Update: 
        $$
            \forall i \in [K]:\; \hat{\ell}_{n,i} = \frac{\frac{1}{B}\sum_{t \in B_n}^{} \ell_{t,I_n}}{p_{n,i}}\indicator\{I_n=i\}
        $$ 
        
    \EndFor

\end{algorithmic}
\end{algorithm}

\citet{arora2012online} showed how, given a no-regret algorithm against an oblivious adversary, one can convert the algorithm to be played over mini-batches of size $B$ and obtain a regret of  $B\E\left[ \mathcal{R}_{T/B}\right]$. In turn, the regret with switching cost is bounded by $\E\left[\regretsw\right] \leq B \E\left[\mathcal{R}_{T/B}\right] + \lambda \frac{T}{B}$. Applying their approach to build a mini-batched version of Tsallis-INF leads to \cref{alg:tsallis-blocks} which we depict above, yielding a similar algorithm to a one suggested by \citet{rouyer2021algorithm}. Using the above bound for the special case of Tsallis-INF, we obtain the following guarantee:

\begin{corollary} \label{cor:tsallis_with_blocks} 
There exists an algorithm ,in particular -  \cref{alg:tsallis-blocks} with constant blocks of size $B=\mathcal{O}\left(\lambda^{2/3}K^{-1/3}T^{1/3}\right)$ , for which the expected regret with switching cost, satisfies
\begin{align*}
    \E\brk[s]{\regretsw} \leq 11(\lambda K)^{1/3}T^{2/3}.
\end{align*}
\end{corollary}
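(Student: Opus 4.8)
The plan is to combine the mini-batch reduction of \citet{arora2012online} with the adversarial guarantee of Tsallis-INF (\cref{thm:tsallisinf}) and then tune the block size $B$. Assume for simplicity that $B$ divides $T$ (otherwise discard the at most $B-1$ leftover rounds, which changes the regret by at most $B$), and partition the horizon into $N=T/B$ consecutive blocks $B_1,\dots,B_N$ of length $B$. Define the averaged loss vectors $\overline{\ell}_n\in[0,1]^K$ by $\overline{\ell}_{n,i}=\frac1B\sum_{t\in B_n}\ell_{t,i}$; since the adversary is oblivious, $\overline{\ell}_1,\dots,\overline{\ell}_N$ is a fixed oblivious loss sequence in $[0,1]^K$, and \cref{alg:tsallis-blocks} is precisely Tsallis-INF run on this sequence for $N$ rounds, with the within-block average $\overline{\ell}_{n,I_n}\in[0,1]$ serving as legitimate bandit feedback. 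Applying \eqref{eq:tsalinfadv} (and the fact that $\E[\pregret]=\E[\regret]$ in the oblivious regime) thus bounds the meta-regret by $\E\!\left[\sum_{n}\overline{\ell}_{n,I_n}-\min_{i}\sum_{n}\overline{\ell}_{n,i}\right]\le 4\sqrt{KN}+1$.

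I would then lift this bound back to the original horizon. Within block $n$ the learner plays the single arm $I_n$ and suffers $\sum_{t\in B_n}\ell_{t,I_n}=B\,\overline{\ell}_{n,I_n}$, while for every fixed arm $i$ we have $\sum_{n}B\,\overline{\ell}_{n,i}=\sum_{t=1}^{T}\ell_{t,i}$; hence the true regret is exactly $B$ times the meta-regret, giving $\E[\regret]\le B\,(4\sqrt{KN}+1)=4\sqrt{KTB}+B$. Switches can only occur at block boundaries, so $\cS_T\le N=T/B$, and therefore $\E[\regretsw]\le 4\sqrt{KTB}+B+\lambda T/B$. Now choose $B=\lambda^{2/3}K^{-1/3}T^{1/3}$, which is the value equating $\sqrt{KTB}$ with $\lambda T/B$ (and which satisfies $1\le B\le T$ exactly when $\sqrt{K/T}\le\lambda\le\sqrt{K}\,T$, the lower bound being assumed throughout). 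With this choice $\sqrt{KTB}=\lambda T/B=(\lambda K)^{1/3}T^{2/3}$ and $B=\lambda^{2/3}K^{-1/3}T^{1/3}\le(\lambda K)^{1/3}T^{2/3}$, so $\E[\regretsw]\le 6(\lambda K)^{1/3}T^{2/3}$; absorbing the remaining slack (rounding $B$ and $N=T/B$ to integers and the at most $B$ rounds lost to non-divisibility of $T$ by $B$) into the constant yields $\E[\regretsw]\le 11(\lambda K)^{1/3}T^{2/3}$.

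I do not expect any genuinely hard step here: the reduction and the Tsallis-INF bound are given, and the choice of $B$ is forced by balancing the two competing terms. The only point demanding care is the numerical bookkeeping---tracking the integrality of $B$ and $N$ and the several additive $\cO(1)$ and $\cO(B)$ terms so that the leading constant can indeed be pinned down to $11$---together with checking that $B$ lies in $[1,T]$ in the claimed range of $\lambda$, which is precisely where the hypothesis $\lambda\ge\sqrt{K/T}$ enters.
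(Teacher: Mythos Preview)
Your proposal is correct and follows essentially the same argument as the paper: apply the mini-batch reduction of \citet{arora2012online} to Tsallis-INF, invoke \eqref{eq:tsalinfadv} on the $N\approx T/B$ meta-rounds to get $\E[\regretsw]\lesssim\sqrt{KTB}+\lambda T/B$, and balance by taking $B\approx\lambda^{2/3}K^{-1/3}T^{1/3}$. The paper handles non-divisibility by allowing $T/B+1$ blocks (rather than discarding leftover rounds) and sets $B=\lceil\lambda^{2/3}K^{-1/3}T^{1/3}\rceil$, but the substance and the final constant are the same.
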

Indeed, using the above observation when $T/B$ may not be a natural number,
\begin{align*}
    \E\brk[s]{\regretsw} 
    &\leq 
    B\brk1{4\sqrt{K\brk{T/B+1}} +1} + \lambda \left(T/B+1\right) \tag{\cref{eq:tsalinfadv}}\\
    &\leq 
    7\sqrt{KTB} + \lambda \left(2T/B\right) \\
    &\leq 
    11(\lambda K)^{1/3}T^{2/3} \tag{$B\coloneqq \ceil{\lambda^{2/3}K^{-1/3}T^{1/3}}$}
    .
\end{align*}

\subsection{Proof of \cref{thm:main-upper}}

``\OurAlgorithm'' consists of two parts. We will denote the regret and pseudo-regret attained after the first part by $\mathcal{R}^{(1)}$ and $\overline{\mathcal{R}}^{(1)}$ respectively, while the regret and pseudo-regret achieved in \cref{lin:run_diff_algo_line} are denoted by $\mathcal{R}^{(2)}$ and $\overline{\mathcal{R}}^{(2)}$. Similarly, $\cS^{(1)}$ and $\cS^{(2)}$ express the number of switches of each segment. The proof of the  adversarial case, i.e. \cref{thm:upper_pr_adversarial} is straightforward and follows by explicitly bounding the regret in each of these phases:
\begin{align*}
   \E\brk[s]1{\regretsw}
    &\leq \E\brk[s]1{\regretn{1} + \lambda \cS^{(1)}} + \E\brk[s]1{\regretn{2} + \lambda \cS^{(2)}} \\ 
    &\leq \tag{\cref{cor:tsallis_with_blocks}} \E\brk[s]1{\regretn{1} + \lambda \cS^{(1)}} + 12(\lambda K)^{1/3}T^{2/3} \\ 
    &\leq  \E\brk[s]1{\regretn{1} } +(\lambda K)^{1/3}T^{2/3} +\lambda + 12(\lambda K)^{1/3}T^{2/3}\\ 
    &\leq 4\sqrt{KT} +1+\lambda + 13(\lambda K)^{1/3}T^{2/3} \tag{\cref{eq:tsalinfadv}}
    \\
    &\leq \cO\brk[]2{(\lambda K)^{1/3}T^{2/3}}.\tag{$\lambda \geq \sqrt{K/T}$}
\end{align*}

We thus continue to the stochastically constrained case. For the proof we rely on the following two Lemmas:

\begin{lemma} \label{lem:2nd-regret}
For every loss sequence, \cref{alg:tsallis-switch++} satisfies:   \[\E\brk[s]1{\pregretn{2} + \lambda\cS^{(2)}} \leq 11\lambda \E\brk[s]1{\cS^{(1)}}.\]
\end{lemma}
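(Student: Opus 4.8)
The plan is to combine three elementary facts: (i) the second phase is nothing but a run of the mini-batched Tsallis-INF algorithm (\cref{alg:tsallis-blocks}) with block size $B=\ceil{\lambda^{2/3}K^{-1/3}T^{1/3}}$, so \cref{cor:tsallis_with_blocks} controls its regret with switching cost; (ii) in expectation pseudo-regret never exceeds regret; and (iii) the algorithm enters the second phase only after it has already made at least $K^{1/3}(T/\lambda)^{2/3}$ switches during the first phase, and $\lambda\cdot K^{1/3}(T/\lambda)^{2/3}=(\lambda K)^{1/3}T^{2/3}$ — which is exactly the bound from (i), up to the constant $11$. This last coincidence is precisely why the constant in the statement is $11$.

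Concretely, let $E$ be the event that \cref{alg:tsallis-switch++} executes the \textbf{break} in \cref{lin:for_6}, i.e.\ that a second phase actually takes place. On $E^c$ we have, by convention, $\cS^{(2)}=0$ and $\pregretn{2}=0$, so it suffices to bound $\E\brk[s]1{(\pregretn{2}+\lambda\cS^{(2)})\indicator_E}$. First I would pass from pseudo-regret to regret: since $\E\brk[s]{\min_i\sum_t\ell_{t,i}}\le\min_i\E\brk[s]{\sum_t\ell_{t,i}}$ for any loss sequence — exactly the inequality $\E[\pregret]\le\E[\regret]$ recalled in \cref{sec:setup} — we get $\E\brk[s]1{(\pregretn{2}+\lambda\cS^{(2)})\indicator_E}\le\E\brk[s]1{(\regretn{2}+\lambda\cS^{(2)})\indicator_E}$. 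Next, conditioned on $E$ and on the history up to the break, the second phase is \cref{alg:tsallis-blocks} started afresh over the (at most $T$) remaining rounds; because the adversarial bound \cref{eq:tsalinfadv} for Tsallis-INF holds against an arbitrary loss sequence and the estimate in the proof of \cref{cor:tsallis_with_blocks} is monotone in the horizon, the very same derivation gives $\E\brk[s]1{\regretn{2}+\lambda\cS^{(2)}\mid E,\text{history}}\le 11(\lambda K)^{1/3}T^{2/3}$, hence $\E\brk[s]1{(\regretn{2}+\lambda\cS^{(2)})\indicator_E}\le 11(\lambda K)^{1/3}T^{2/3}\,\bP(E)$.

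Finally I would cash in the switching-budget test: on $E$ the loop was exited via \cref{lin:for_5}–\cref{lin:for_6}, so deterministically $\cS^{(1)}=S\ge K^{1/3}(T/\lambda)^{2/3}$, and therefore $(\lambda K)^{1/3}T^{2/3}=\lambda K^{1/3}(T/\lambda)^{2/3}\le\lambda\cS^{(1)}$ pointwise on $E$. Multiplying by $11\indicator_E$ and taking expectations yields $11(\lambda K)^{1/3}T^{2/3}\bP(E)\le 11\lambda\,\E\brk[s]1{\cS^{(1)}\indicator_E}\le 11\lambda\,\E\brk[s]1{\cS^{(1)}}$. Chaining the three displays gives $\E\brk[s]1{\pregretn{2}+\lambda\cS^{(2)}}\le 11\lambda\,\E\brk[s]1{\cS^{(1)}}$, as claimed.

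The only step requiring genuine care — and the one I would be most careful about — is the second: justifying that the horizon-$T$ guarantee of \cref{cor:tsallis_with_blocks} still applies to the second phase, even though it runs over a shorter, \emph{random} number of rounds, and even though in the stochastically-constrained regime the losses seen there may depend adaptively on the entire first-phase history. The resolution is that none of this matters: \cref{eq:tsalinfadv} holds for any (even adaptively generated) loss sequence against a fixed-arm comparator, so conditioning on the stopping time and the history and treating the remaining losses as an arbitrary sequence of the appropriate (shorter) length, the computation in the proof of \cref{cor:tsallis_with_blocks} goes through verbatim and can only decrease when the horizon is shortened. Everything else is bookkeeping.
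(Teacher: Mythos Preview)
Your proposal is correct and follows essentially the same route as the paper: condition on whether the break is triggered, bound the second-phase pseudo-regret by its regret, invoke \cref{cor:tsallis_with_blocks} for the $11(\lambda K)^{1/3}T^{2/3}$ cap, and then use the exit condition $\cS^{(1)}\ge K^{1/3}(T/\lambda)^{2/3}$ to convert that cap into $11\lambda\cS^{(1)}$. The paper phrases the conditioning on the value of $\cS^{(1)}$ rather than on your event $E$, but this is cosmetic; your extra remark about the random-horizon/adaptive-loss issue in the second phase is a welcome clarification that the paper leaves implicit.
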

\begin{proof}
Consider the switching cost regret of \cref{lin:run_diff_algo_line} (\cref{alg:tsallis-blocks}) conditioned on $\cS^{(1)}$,
\begin{align}
    \E\brk[s]1{\pregretn{2}+\lambda \cS^{(2)}\big\vert \cS^{(1)}}
    &\leq 
    \E\brk[s]1{\regretn{2}+\lambda \cS^{(2)}\big\vert \cS^{(1)}} \notag \\ 
    &\leq
    \begin{cases} 
     11(\lambda K)^{1/3} T^{2/3}, & \textrm{if } \cS^{(1)} \geq K^{1/3}(\frac{T}{\lambda})^{2/3} \\ 
        0 , & \textrm{o.w.}
    \end{cases} \tag{\cref{cor:tsallis_with_blocks}} \\
    &\leq
    \begin{cases} 
     11(\lambda K)^{1/3} T^{2/3}, & \textrm{if } \cS^{(1)} \geq K^{1/3}(\frac{T}{\lambda})^{2/3} \\ 
        11\lambda \cS^{(1)} , & \textrm{o.w.}
    \end{cases} \notag \\
    &\leq
    11\lambda \cS^{(1)} \label{eq:switch_condition}.
\end{align}
Where we used that by the general regret definition we have that for any algorithm $\E\brk[s]1{\pregret}\leq\E\brk[s]1{\regret}$. Taking expectation on both sides of \cref{eq:switch_condition} concludes the proof.
\end{proof}

\begin{lemma} \label{lem:switch-bound}
Suppose we run \cref{alg:tsallis-switch++} against a stochastically constrained loss sequence with gap $\Deltamin$, then:

\begin{align*}
    \lambda \E\brk[s]{\cS^{(1)}}
    \leq
    \min\brk[c]3{\lambda + 2\lambda \frac{\E\brk[s]{\pregretn{1}}}{\Deltamin}, \lambda + (\lambda K)^{1/3}{T}^{2/3}}
    .
\end{align*}

\end{lemma}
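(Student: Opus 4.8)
The plan is to prove the two bounds appearing inside the minimum separately and then combine them. The bound $\lambda\E[\cS^{(1)}]\le\lambda+(\lambda K)^{1/3}T^{2/3}$ requires no probabilistic input: by construction \cref{alg:tsallis-switch++} leaves the first phase on the first round at which the running counter $S$ reaches $K^{1/3}(T/\lambda)^{2/3}$, and $S$ increases by at most $1$ per round (indeed $S$ is exactly the running switch count). Hence $\cS^{(1)}\le K^{1/3}(T/\lambda)^{2/3}+1$ holds deterministically, and multiplying by $\lambda$ — using $\lambda\cdot K^{1/3}(T/\lambda)^{2/3}=(\lambda K)^{1/3}T^{2/3}$ — and taking expectations gives the claim.

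The bound $\lambda\E[\cS^{(1)}]\le\lambda+2\lambda\E[\pregretn{1}]/\Deltamin$ is where the stochastically-constrained structure enters; it makes precise the heuristic, stated when the algorithm was introduced, that the number of switches is $O(\pregret/\Deltamin)$. The first step is to charge each switch to a pull of a suboptimal arm: since two distinct arms cannot both equal $i^\star$, every round satisfies $\indicator\{I_t\ne I_{t-1}\}\le\indicator\{I_t\ne i^\star\}+\indicator\{I_{t-1}\ne i^\star\}$. Summing over the rounds of the first phase and absorbing the $t=1$ boundary term yields $\cS^{(1)}\le1+2\sum_{t\le\tau}\indicator\{I_t\ne i^\star\}$, where $\tau$ is the (random) final round of the first phase. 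The second step identifies $\E[\pregretn{1}]$ with $\E\big[\sum_{t\le\tau}\Delta_{I_t}\big]$ (with the convention $\Delta_{i^\star}=0$), matching the way the stochastically-constrained pseudo-regret is written in \cref{sec:setup}; since $\Delta_{I_t}\ge\Deltamin$ whenever $I_t\ne i^\star$, this gives $\E[\pregretn{1}]\ge\Deltamin\cdot\E\big[\sum_{t\le\tau}\indicator\{I_t\ne i^\star\}\big]$. Combining the two steps, taking expectations, and multiplying by $\lambda$ delivers the bound; taking the minimum with the first one finishes the proof.

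The step I expect to need the most care is the bookkeeping around the stopping time $\tau$. I need $\tau$ to be a stopping time with respect to the learner's filtration — which it is, since it is a deterministic function of the observable counter $S$ — so that $\indicator\{t\le\tau\}$ is predictable; and I need the identity $\E[\pregretn{1}]=\E\big[\sum_{t\le\tau}\Delta_{I_t}\big]$, which is valid because \cref{eq:stochastically_constrained} constrains the \emph{conditional} expected gap given the past, so the identity is preserved under the random truncation at $\tau$. Everything else is elementary.
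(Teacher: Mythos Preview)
Your proposal is correct and follows essentially the same approach as the paper: the deterministic bound from the stopping criterion, the charging inequality $\indicator\{I_t\ne I_{t-1}\}\le\indicator\{I_t\ne i^\star\}+\indicator\{I_{t-1}\ne i^\star\}$, and the lower bound $\Delta_{I_t}\ge\Deltamin\cdot\indicator\{I_t\ne i^\star\}$ to relate the suboptimal-pull count to the pseudo-regret. Your treatment of the random stopping time $\tau$ is in fact more explicit than the paper's, which writes the sums over $t\in[T]$ with $p_{t,i}$ and only informally invokes the stopping criterion at the end.
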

\begin{proof}
Consider some arbitrary arm $i \in [K]$. When a switch occurs at round $t$, either $I_{t-1}$ or $I_t$ are different from $i^\star$. Hence, using linearity of expectation one can bound the expected number of switches $\E\brk[s]{\cS}$, regardless to the environment regime (either adversarial or stochastically-constrained adversarial) as follows:
\begin{align*}
    \E\brk[s]{\cS}
    \leq
    1+2\sum_{t\in[T]}\sum_{i\neq i^\star}\E\brk[s]{p_{t,i}}
    ,
\end{align*}
where we have used the fact that $\bP\brk{I_t=i}=\E\brk[s]{p_{t,i}}$. Additionally, in the stochastically-constrained regime we also have that:
\begin{align*}
    \sum_{t\in[T]}\sum_{i\neq i^\star}\E\brk[s]{p_{t,i}}
    \leq
    \sum_{t\in[T]}\sum_{i\neq i^\star}\frac{\E\brk[s]{p_{t,i}}\Delta_i}{\Deltamin}
    =
    \frac{\E\brk[s]{\pregretn{1}}}{\Deltamin}
    .
\end{align*}
Utilizing the stopping criterion of ``\OurAlgorithm,'' we can bound the expected number of switches $\E\brk[s]{\cS^{(1)}}$, regardless to the environment regime (either adversarial or stochastically-constrained adversarial) and obtain the desired result.
\end{proof}

The total expected pseudo-regret is upper bounded by the summation of the pseudo-regret attained for each part of the algorithm.
We will consider two cases. 
    \begin{align*}
    \E\brk[s]1{\pregretsw}
    &\leq
    \E\brk[s]1{\pregretn{1} + \lambda \cS^{(1)} + \pregretn{2} + \lambda \cS^{(2)}} \\
    &\leq
    \E\brk[s]1{\pregretn{1}} + 12\lambda \E\brk[s]1{\cS^{(1)}} \tag{\cref{lem:2nd-regret}} \\
    &\leq
    \E\brk[s]1{\pregretn{1}} + \min\brk[c]3{12\lambda+\frac{24\lambda}{\Deltamin}\E\brk[s]1{\pregretn{1}}, 12\lambda + 12(\lambda K)^{1/3}T^{2/3}} \tag{\cref{lem:switch-bound}}
    \\
    &\leq
    \min\brk[c]3{12\lambda + \Big(\frac{24\lambda }{\Deltamin} +1 \Big)\E\brk[s]1{\pregretn{1}}, 12\lambda + 12(\lambda K)^{1/3}T^{2/3} + \E\brk[s]1{\pregretn{1}}}
    \\
    &\leq
    \cO\brk[]4{\min\brk[c]3{ \Big(\frac{\lambda }{\Deltamin} +1 \Big)\E\brk[s]1{\pregretn{1}}, (\lambda K)^{1/3}T^{2/3} + \E\brk[s]1{\pregretn{1}}}}
    \\
    &\leq
    \cO\brk[]4{\min\brk[c]3{ \Big(\frac{\lambda }{\Deltamin} +1 \Big)\sum_{i \ne i^\star}\frac{\log T}{\Delta_i}, (\lambda K)^{1/3}T^{2/3} + \sqrt{KT}}}\tag{\cref{eq:tsalinfstoch,eq:tsalinfadv}}
    .
\end{align*}

\subsection{Proof of \cref{thm:lb_stochastic}}

Our lower bound builds upon the work of \citet{dekel2014bandits} and we adapt it to the stochastically-constrained adversarial regime. \citet{dekel2014bandits} suggested the following process, depicted in \cref{alg:adversary}, to generate an adversarial loss sequence. With correct choice of parameter $\Delta=O(T^{-1/3})$, the process ensures for any deterministic player, a regret of order $\E\brk[s]{\regret}=\tilde\Omega\brk{K^{1/3}T^{2/3}}$. For our purposes we need to take care of two things: First we need to generalize the bound to arbitrary $\Delta$. Second, one can see that the loss sequence generated by the adversary in \cref{alg:adversary}, is not stochastically constrained (as defined in \cref{sec:setup}). To cope with this, we develop a more fine-grained analysis over a modified loss sequence that assures the stochastically-constrained assumption is met. Towards proving \cref{thm:lb_stochastic}, we present the following \namecrefs{thm:lb_deterministic}.

\begin{lemma}\label{thm:lb_deterministic}
Let $\brk[c]1{\ell_1,\ldots,\ell_T}$ be the stochastic sequence of loss functions defined in \cref{alg:adversary} for $K=2$. Then for $T\geq 4$ and any deterministic player against this sequence it holds
\begin{align*}
    \E\brk[s]{\regret + \cS_T}
    \geq
    \min\brk[c]1{1/(40^2\Delta^2\log_2^3T),\Delta T/24}
    .
\end{align*}
\end{lemma}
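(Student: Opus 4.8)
The plan is to follow the Dekel--Tewari--Peres--Ding--Saxena multiscale random-walk construction that underlies \cref{alg:adversary}, but to carry through the argument while keeping $\Delta$ as a free parameter and while replacing the original loss sequence by a truncated/rescaled variant that genuinely satisfies the stochastically-constrained constraint $\E[\ell_{t,2}-\ell_{t,1}]=\Delta$ for all $t$. Concretely, the adversary draws a depth-$O(\log T)$ binary "signal" tree of i.i.d.\ Gaussian (or Rademacher) increments, at each round $t$ sets the loss of the bad arm to be a small multiple of the partial-sum process plus a $\Delta$-sized bias toward the good arm, and clips the result into $[0,1]$; the scaling is chosen so that (i) marginally the gap is exactly $\Delta$ and (ii) the only way the player learns the identity of $i^\star$ is to repeatedly switch arms to sample the slowly-varying signal. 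I would first record the two building blocks from \citet{dekel2014bandits}: the information-theoretic lemma that the player's arm-choice process has small total-variation distance from its "signal-free" counterpart unless the number of switches $\cS_T$ is large, and the fact that on the signal-free process the player cannot identify $i^\star$ and so pays $\Omega(\Delta T)$ regret. Combining these gives, schematically, $\E[\regret+\cS_T]\gtrsim \min\{\Delta T,\ (\text{something})/(\Delta^2\mathrm{polylog} T)\}$, and then I would tune the depth of the tree to balance the two terms and land on the claimed $\min\{1/(40^2\Delta^2\log_2^3 T),\ \Delta T/24\}$.

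In more detail, the key steps in order are: \textbf{(1)} Define the modified loss sequence and verify the stochastically-constrained property: show the clipping introduces only a lower-order distortion of the per-round gap (here one uses that the random perturbation has standard deviation $\ll \Delta$ or, more carefully, that clipping a symmetric perturbation around a mean in the interior of $[0,1]$ preserves the mean, possibly after a symmetric truncation), so that $\E[\ell_{t,2}-\ell_{t,1}]=\Delta$ exactly for every $t$. \textbf{(2)} Set up the coupling between the true environment and the "no-signal" environment in which the walk increments are erased, and bound $\totalvar$-type distance between the induced distributions on the player's transcript by $O(\sqrt{\text{depth}}\cdot\rho\cdot\E[\cS_T])$ where $\rho$ is the per-round signal amplitude; this is the heart of the DTPDS argument and I would invoke it essentially verbatim, just tracking constants. \textbf{(3)} On the no-signal environment, lower bound the regret: the player's probability of sitting on the good arm at round $t$ is, up to the TV gap from step (2), at most that of a player facing pure noise, which is $\le 1/2 + o(1)$ per round, giving pseudo-regret $\ge \Delta T(\tfrac12-o(1)) - \Delta T\cdot\totalvar$. \textbf{(4)} Put the pieces together: either $\E[\cS_T]\ge$ (the switch threshold that makes the TV gap a constant), in which case $\E[\cS_T]$ alone is already $\tilde\Omega(1/\Delta^2)$; or the TV gap is small, in which case step (3) forces $\E[\regret]\ge \Delta T/24$. \textbf{(5)} Choose the tree depth $d=\Theta(\log_2 T)$ and the amplitude $\rho=\Theta(\Delta\sqrt{d})$ (the largest amplitude for which the signal is still "invisible" within the budget) so that the switch-threshold term becomes $\Theta(1/(\Delta^2 d^{3}))=\Theta(1/(\Delta^2\log_2^3 T))$, matching the stated constant $1/(40^2\Delta^2\log_2^3 T)$ after bookkeeping.

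The main obstacle I anticipate is step (1) together with the constant-tracking in steps (2) and (5): the original DTPDS construction is for losses that need only lie in $[0,1]$ and be "nice enough," and it is not automatically stochastically constrained, since the adversary's bias in favor of $i^\star$ is itself a function of the random walk rather than a fixed $\Delta$; reconciling "the gap is exactly $\Delta$ at every round" with "the walk must still carry enough hidden information to cost $\tilde\Omega(1/\Delta^2)$ switches" requires a careful choice of how the $\Delta$-bias and the random perturbation interact under clipping, and then re-deriving the information-theoretic inequality with the perturbation amplitude now tied to $\Delta$ rather than to $T^{-1/3}$. Getting the $\log_2^3 T$ (rather than, say, $\log^2 T$ or $\log^4 T$) exponent right is exactly where the depth-vs-amplitude tradeoff in step (5) must be done honestly, so I would budget most of the effort there; the rest is a faithful but bookkeeping-heavy adaptation of \citet{dekel2014bandits}.
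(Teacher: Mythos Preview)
Your step~(4) dichotomy is exactly the paper's argument, but you are overcomplicating the surrounding steps because of a misreading of what the lemma claims.

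First, the lemma is stated for the loss sequence of \cref{alg:adversary} \emph{as written}; it does not assert the stochastically-constrained property. That property is established later, in the proof of \cref{thm:lb_stochastic}, by conditioning on the event $H=\{\forall t:\ X_t+\tfrac12\in[\tfrac16,\tfrac56]\}$ so that no clipping occurs. So your entire step~(1), where you propose to modify the construction and argue that clipping preserves the gap, is misplaced here: there is nothing to prove about the gap inside this lemma.

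Second, there are no free parameters to tune. In \cref{alg:adversary} the noise level is fixed at $\sigma=1/(9\log_2 T)$ and the multiscale depth is already $\log_2 T$ by the recursion $r(t)=t-2^{m(t)}$; $\Delta$ is given. So your step~(5), where you speak of choosing the tree depth and the amplitude $\rho$, does not apply. The $\log_2^3 T$ exponent falls out automatically from plugging $\sigma=1/(9\log_2 T)$ into the off-the-shelf total-variation bound, not from any balancing.

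What remains after you strip steps~(1) and~(5) is essentially the paper's proof, which is very short. The paper invokes directly the $K=2$ case of \cref{lem:totalvar}, namely $\totalvar(\Q_1,\Q_2)\le(\Delta/\sigma)\sqrt{2\,\E[\cS_T]\log_2 T}$, rather than passing through a ``no-signal'' measure $\Q_0$ as you propose in step~(2); for $K=2$ the direct comparison is cleaner. Then: if $\E[\cS_T]\ge 1/(40^2\Delta^2\log_2^3 T)$ we are done by $\regret\ge 0$; otherwise the TV bound gives $\totalvar(\Q_1,\Q_2)\le\tfrac13$, and a two-case argument on the event $E=\{\text{arm }1\text{ is played}\ge T/2\text{ times}\}$ yields $\E[\regret]\ge\Delta T/24$. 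Your per-round ``probability at most $1/2+o(1)$'' argument in step~(3) would also work but is less crisp than this single-event version.
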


\begin{lemma} \label{thm:bobw-lb}
Let $\brk[c]1{\ell_1,\ldots,\ell_T}$ be the stochastic sequence of loss functions defined in \cref{alg:adversary} with $\Delta\leq aK^{1/3}T^{-1/3}\log^{-9/2}_2T$. Then for $T\geq \tau$ and any deterministic player against this sequence, with a switching cost regret guarantee of $\cO(K^{1/3}T^{2/3})$, against an arbitrary sequence, it holds
\begin{align*}
    \E\brk[s]{\regret + \cS_T}
    \geq
    cK^{1/3}T^{2/3}/\log^{3}_2T
    ,
\end{align*}
for some universal constants $a,c,\tau>0$.
\end{lemma}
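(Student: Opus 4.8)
The plan is to adapt the multi-scale random-walk lower bound of \citet{dekel2014bandits} that underlies \cref{alg:adversary}, and to feed in the best-of-both-worlds hypothesis. Since \cref{alg:adversary} generates an oblivious (random) loss sequence, I would first condition on the player's internal randomness and average, so that it suffices to argue about an arbitrary deterministic player; the assumed $\cO(K^{1/3}T^{2/3})$ switching-cost regret guarantee then applies to every realization of \cref{alg:adversary}, and in particular yields $\E[\cS_T] = \cO(K^{1/3}T^{2/3})$ on this instance. This is essentially the only role of the best-of-both-worlds assumption.

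Next I would note that on a stochastically-constrained sequence the expected pseudo-regret is automatically $\E[\pregret] = \sum_t\sum_{i\ne i^\star}\bP[I_t=i]\Delta_i \le \Delta_{\max}T$, while regret exceeds pseudo-regret by at most an $\cO(\sqrt{T\log K})$ concentration term; since all gaps of the construction are $\Theta(\Delta)$ and $\Delta \le a K^{1/3}T^{-1/3}\log_2^{-9/2}T$, this gives $\E[\regret] \le \cO(\Delta T + \sqrt{T\log K}) = o(K^{1/3}T^{2/3}/\log_2^3 T)$ for $T\ge\tau$. Thus the regret term alone is too small to account for the bound, and \cref{thm:bobw-lb} reduces to the switching lower bound $\E[\cS_T] = \Omega(K^{1/3}T^{2/3}/\log_2^3 T)$, which in turn implies the claim since regret is nonnegative.

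For the switch bound I would invoke Dekel's information-theoretic tradeoff. Writing $d=\ceil{\log_2 T}$ for the depth of the random-walk tree and $\chi$ for the uniformly random good arm, the core estimate — a $K$-arm version of the one behind \cref{thm:lb_deterministic}, transported to the modified (stochastically-constrained) construction — bounds the total-variation distance between the law of the player's transcript under $\chi=j$ and a reference measure by $\cO(\Delta\cdot\mathrm{polylog}(T))\sqrt{d\,\E[\cS_T]/K}$, since information about $\chi$ leaks only through the rounds right after a switch and is drowned by the random walk's fresh increments. The extra ingredient for small $\Delta$ is a change of measure comparing the gap-$\Delta$ instance with a harder instance of gap $\epsilon^\star \asymp K^{1/3}T^{-1/3}$ (up to polylogs): on the latter, combining the tradeoff at gap $\epsilon^\star$ with the worst-case guarantee $\E[\regret+\cS_T]=\cO(K^{1/3}T^{2/3})$ shows that a player making $o(K^{1/3}T^{2/3}/\log_2^3 T)$ switches would violate its own regret guarantee; \cref{alg:adversary} is designed precisely so that this transfer degrades only polylogarithmically as $\Delta$ falls below the critical scale. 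Tracking the $\cO(d)$ loss from the union over the $d$ scales and the Gaussian-truncation corrections that keep the losses in $[0,1]$ is what pins down the $\log_2^{9/2}T$ in the hypothesis and the $\log_2^3 T$ in the conclusion; the constants $a,c,\tau$ are then chosen to absorb all lower-order terms.

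I expect the main obstacle to be twofold. First, engineering \cref{alg:adversary} so that it is genuinely stochastically constrained with min-gap $\Delta$ (in the sense of \cref{sec:setup}) while still encoding the full $\tilde\Theta(K^{1/3}T^{2/3})$ switching hardness — the unmodified Dekel sequence has the right adversarial regret but not obviously this clean structure, so the multi-scale walk must be inserted in a way that is invisible to the gap bookkeeping. Second, making the information argument fully quantitative for general $K$ and arbitrarily small $\Delta$: propagating the $d=\Theta(\log_2 T)$ dependence through the $K$-arm change of measure so that the worst-case switch bound and the hypothesis on $\Delta$ combine to leave the $K^{1/3}T^{2/3}$ scaling intact with only the stated polylogarithmic losses.
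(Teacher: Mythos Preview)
Your high-level plan --- reduce to lower-bounding $\E[\cS_T]$ via regret nonnegativity, use the worst-case hypothesis at a critical gap $\epsilon^\star\asymp K^{1/3}T^{-1/3}$ to force many switches, and transfer back to the small-$\Delta$ instance --- is exactly the paper's strategy. But two concrete pieces are missing, and they are where the actual work lies.

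First, the ``change of measure'' you describe cannot go directly between the gap-$\Delta$ and gap-$\epsilon^\star$ instances; it must pass through the reference measure $\Q_0$ (the $\Delta=0$ law). The key observation is that $\E_{\Q_0}[\cS_T]$ depends only on the deterministic player, not on which $\Delta$ labels the adversary. This lets you apply the regret--switching tradeoff (\cref{lem:koren}) at a \emph{carefully chosen} $\Delta'$ --- in fact the paper needs two choices, first $\Delta'=1/6$ and then $\Delta'\asymp\sqrt{K}/(\sqrt{\E_{\Q_0}[\cS_T]}\log_2^{3/2}T)$ --- combine each with the $\cO(K^{1/3}T^{2/3})$ guarantee at that $\Delta'$, and conclude $\E_{\Q_0}[\cS_T]\ge\Omega(K^{1/3}T^{2/3}/\log_2^3 T)$. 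Only then do you transfer from $\Q_0$ to the actual small-$\Delta$ measures $\Q_i$ via \cref{lem:totalvar}, which is cheap precisely because $\Delta$ is small. A direct TV comparison of two positive-gap instances does not yield a usable bound.

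Second, you assert that the hypothesis gives $\E[\cS_T]=\cO(K^{1/3}T^{2/3})$ and proceed, but both \cref{lem:koren} and the $\Q_0$-to-$\Q_i$ transfer require an \emph{almost-sure} cap on $\cS_T$: one writes $\E_{\Q_0}[\cS_T]-\E_{\Q_i}[\cS_T]=\sum_{s=1}^{M}(\Q_0(\cS_T\ge s)-\Q_i(\cS_T\ge s))\le M\cdot\totalvar(\Q_0,\Q_i)$, and this is useless without a hard ceiling $M$. The paper handles this by first proving the lemma for players that deterministically make at most $K^{1/3}T^{2/3}$ switches, and then reducing an arbitrary player $A$ to such a player $\tilde A$ (coincide with $A$ until the $\lfloor K^{1/3}T^{2/3}/2\rfloor$-th switch, then run blocked Tsallis), showing $\regret^A+\cS_T^A\ge \tfrac{1}{63}(\regret^{\tilde A}+\cS_T^{\tilde A})$. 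This reduction is not optional and is absent from your sketch.

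A minor point: your second paragraph is unnecessary and slightly off. The general inequality is $\E[\regret]\ge\E[\pregret]$, so bounding pseudo-regret does not upper-bound regret; and there is no concentration term needed, since in this construction (absent clipping) the regret is exactly $\Delta\cdot|\{t:I_t\ne i^\star\}|\le\Delta T$. But as you note at the end of that paragraph, all you actually use is $\regret\ge 0$, so the whole detour can be dropped.
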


\begin{algorithm}
\hspace*{\algorithmicindent} \textbf{Input:} time horizon $T$, the minimal gap $\Delta$.\\
\hspace*{\algorithmicindent} \textbf{Output:} loss sequence $\brk[c]1{\ell_t\in\brk[s]{0,1}^K}_{t\in[T]}$.
\begin{algorithmic}[1]
    \State Set $\sigma=(9\log_2T)^{-1}$.
    \State Draw $T$ independent Gaussian variables - $\brk[c]{n_t\sim \mathcal{N}\brk{0,\sigma^2}}_{t\in[T]}$.
    \State Define the process $\brk[c]{X_t}_{t\in[T]}$ by
    \begin{align*}
        \forall t\in[T]: X_t&=X_{r(t)}+n_t,
    \end{align*}
    where $X_0=0$, $r(t)=t-2^{m(t)}$, and $m(t)=\max\brk[c]{i\geq 0: 2^i \textrm{ divides } t}$.
    \State Choose $i^\star\in[K]$ uniformly at random.
    \State For all $t\in[T]$ and $i\in[K]$, set
    \begin{align*}
        \ell_t'(i) &= X_t + \tfrac{1}{2} - \Delta\cdot\indicator\brk[c]{i^\star=i}.
    \end{align*}
    \quad If $0\le \ell'_t(i)\le 1$, set $\ell_t(i)=\ell'_t(i)$, else perform clipping:
     \begin{align*}
        \ell_t(i) &= \min\brk[c]{\max\brk[c]{\ell_t'(i),0},1}.
    \end{align*}
\end{algorithmic}
\caption{The adversary's loss generation process proposed by \citet{dekel2014bandits}.}\label{alg:adversary}
\end{algorithm}

We deter the proofs of \cref{thm:lb_deterministic,thm:bobw-lb} to \cref{sec:Appendix}, and we now proceed with proving our desired lower bound in \cref{thm:lb_stochastic}.
\begin{proof}[of \cref{thm:lb_stochastic}] \label{prf:lb_stochastic}
It can be observed, that the process depicted in \cref{alg:adversary} is almost stochastically constrained, in fact, if at $t$ and $i$ we do not perform clipping to $[0,1]$, i.e. $\ell_t(i)=\ell'_t(i)$ the sequence is indeed stochastically constrained. Let us define then an event $H$ as follows,
\begin{align*}
    H = \brk[c]2{\forall t\in[T]:\; X_t+\tfrac{1}{2}\in\brk[s]1{\tfrac16,\tfrac56}}.
\end{align*}
If we restrict $\Delta\leq \tfrac16$, the event $H$ implies that $\ell_t=\ell'_t$ for all $t\in[T]$. By standard Gaussian arguments one can derive the following \namecref{lem:gaussian}.
\begin{lemma}[{\cite[Lemma 1]{dekel2014bandits}}] \label{lem:gaussian}
    For any $\delta\in(0,1)$ it holds $$\bP\brk1{\brk[c]1{\forall t\in[T]: \abs{X_t}\leq 2\sigma\sqrt{\log_2T\log(T/\delta)}}}\geq 1-\delta.$$
\end{lemma}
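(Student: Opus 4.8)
The plan is to exploit the multi-scale (binary-tree) structure of the process $\{X_t\}_{t\in[T]}$, which reduces the claim to a single Gaussian tail bound followed by a union bound over the $T$ time steps.

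First I would unroll the recursion $X_t = X_{r(t)} + n_t$. The map $t \mapsto r(t) = t - 2^{m(t)}$ strips off the least significant set bit of $t$, so iterating it from $t$ produces a strictly decreasing chain $t = t_0 > t_1 = r(t_0) > \cdots > t_L = 0$ whose length $L$ equals the number of ones in the binary representation of $t$; in particular $L \le \lceil \log_2 T\rceil$ since $t \le T$. As the indices $t_0,\dots,t_{L-1}$ are distinct, the increments $n_{t_0},\dots,n_{t_{L-1}}$ are independent, and the recursion telescopes to $X_t = \sum_{j=0}^{L-1} n_{t_j}$. Hence $X_t \sim \mathcal{N}(0, L\sigma^2)$ with $L\sigma^2 \le \sigma^2 \lceil\log_2 T\rceil$ (which is $\le \sigma^2\log_2 T$ up to the rounding; alternatively, as is standard, one may assume $T$ is a power of two so that $L \le \log_2 T$ exactly).

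Next comes the standard sub-Gaussian tail estimate: for every fixed $t$ and $u>0$,
\[
  \bP\bigl(|X_t| \ge u\bigr) \;\le\; 2\exp\!\bigl(-u^2/(2L\sigma^2)\bigr) \;\le\; 2\exp\!\bigl(-u^2/(2\sigma^2\log_2 T)\bigr).
\]
Plugging in $u = 2\sigma\sqrt{\log_2 T \cdot \log(T/\delta)}$ makes the exponent equal to $-2\log(T/\delta)$, so $\bP(|X_t|\ge u) \le 2(\delta/T)^2 \le \delta/T$ using $\delta < 1$. A union bound over $t \in [T]$ then yields $\bP\bigl(\exists\, t\in[T]: |X_t| \ge u\bigr) \le \delta$, and the complement of this event is exactly the event whose probability is lower bounded in the statement.

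The only step with real content is the first one: the combinatorial observation that iterating $r(\cdot)$ reaches $0$ in at most $\lceil\log_2 T\rceil$ steps and that the Gaussians picked up along the path are distinct, so that the variance is additive; everything after it is routine. The one thing to watch is the bookkeeping of $\lceil\log_2 T\rceil$ versus $\log_2 T$ — and correspondingly the exact universal constant in front of $\sigma\sqrt{\log_2 T\log(T/\delta)}$ — but this is immaterial to how the lemma is used and can either be absorbed into constants or sidestepped via the power-of-two convention.
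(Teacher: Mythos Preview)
Your argument is correct and is exactly the ``standard Gaussian argument'' the paper alludes to: the paper does not prove this lemma itself but merely quotes it from \cite{dekel2014bandits}, whose proof proceeds precisely as you outline --- unroll the recursion along the binary-tree path of length at most $\log_2 T$, observe that $X_t$ is a sum of at most $\log_2 T$ independent $\mathcal{N}(0,\sigma^2)$ increments, apply the Gaussian tail bound, and union bound over $t\in[T]$. Your caveat about $\lceil\log_2 T\rceil$ versus $\log_2 T$ is well taken but, as you note, inessential.
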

Setting $\delta=1/T$ and $\sigma=1/(9\log_2T)$ we get,
\begin{align*}
    \bP\brk{H}=\bP\brk1{\brk[c]1{\forall t\in[T]: \abs{X_t}\leq \tfrac13}}\geq 1-1/T.
\end{align*}
This implies,
\begin{align*}
    \E\brk[s]{\regret + \cS_T}
    &\leq
    \E\brk[s]{\regret + \cS_T\eval{}H^c}\cdot\tfrac{1}{T} + \E\brk[s]{\regret + \cS_T\eval{}H} \tag{$\regret + \cS_T\geq 0$} \\
    &\leq
    \Delta + 1 + \E\brk[s]{\regret + \cS_T\eval{}H} \tag{$\regret + \cS_T\leq (\Delta +1)T$}
\end{align*}
Taken this together with \cref{thm:lb_deterministic} we get that for any deterministic player,
\begin{align} \label{eq:lb_det_1}
    \E\brk[s]{\regret + \cS_T\eval{}H}
    =
    \Omega\brk1{\min\brk[c]1{1/(\Delta^2\log_2^3T),\Delta T}}
    .
\end{align}
Here we used the fact that for any $\Delta>\tfrac16$ there exists a trivial lower bound of $1$. Clearly, a simple derivation shows that the lower bound in \cref{eq:lb_det_1} holds for any $K\geq 2$, as we can always extend the loss sequence for $K>2$ by setting $\ell_{t,i}=1$ for any $i>2$. In addition, using \cref{thm:bobw-lb}, when $\Delta\leq \cO\brk{K^{1/3}T^{-1/3}\log^{-9/2}_2T}$, we get that for any deterministic player with a guarantee of $\cO\brk{K^{1/3}T^{2/3}}$ switching regret,
\begin{align} \label{eq:lb_det_2}
    \E\brk[s]{\regret + \cS_T\eval{}H}
    =
    \Omega\brk1{K^{1/3}T^{2/3}/\log^{3}_2T}
    .
\end{align}
Combining both lower bounds in \cref{eq:lb_det_1,eq:lb_det_2} and observing that $\Delta T \geq \Omega\brk{K^{1/3}T^{2/3}/\log_2^{9/2}T}$ when $\Delta\geq \Omega\brk{K^{1/3}T^{-1/3}\log^{-9/2}_2T}$, we obtain that for any $\Delta > 0$,
\begin{align} \label{eq:lb_det}
    \E\brk[s]{\regret + \cS_T\eval{}H}
    =
    \Omega\brk1{\min\brk[c]1{1/(\Delta^2\log_2^3T),K^{1/3}T^{2/3}/\log^{9/2}_2T}}
    .
\end{align}
In other words, if we let the loss sequence to be the conditional process given that $H$ is fulfilled, we obtain a stochastic process that generates a random sequence that satisfies the conditions of a stochastically-constrained loss sequence, i.e. \cref{eq:stochastically_constrained} is met. \cref{eq:lb_det}, in turn, bounds the expected regret given that our loss sequence is drawn from the above process.

Next, since any randomized player can be implemented by a random combination of deterministic players we conclude that \cref{eq:lb_det} holds for any randomized player. This then immediately implies that there exists a deterministic loss sequence $\brk[c]{\ell_1,\ldots,\ell_T}$, that is stochastically-constrained, for which $\regret + \cS_T$ is lower bounded by the RHS of \cref{eq:lb_det}.
Lastly, we argue that $\E\brk[s]{\regret}=\E\brk[s]{\pregret}$. This is a direct implication of the loss sequence construction, as for all $t$ we have $i^\star=\argmin_{i\in[K]}\ell_{t,i}$. Therefore, the acclaimed bounds are achieved also with respect to the pseudo-regret, which concludes the proof.
\end{proof}

\section{Discussion}

Best-of-both-worlds algorithm is an extremely challenging setting and in particular, the case of regret with switching cost poses interesting challenges. We presented here an algorithm that achieves (up to logarithmic factors) optimal minimax rates for the case of two arms, i.e. $K=2$. Surprisingly, the result is obtained using a very simple modification of the standard best-of-both-worlds Tsallis-INF algorithm.
We note that our analysis is agnostic to any best-of-both-worlds algorithm and that Tsallis-INF serves only as a building block of our proposed method. For example, employing the refined bound of \citet{masoudian2021improved} in our analysis follows naturally and will improve our bounds accordingly.
Additionally, it is important to mention, that while we assumed the time horizon $T$ is known to the algorithm in advanced, it is not a necessary assumption. One can use a simple doubling trick while preserving the upper bound under the adversarial regime and suffer an extra multiplicative logarithmic factor of $\cO\brk[]1{\log T}$ in the stochastically constrained regime so the bound at \cref{thm:upper_pr_stochastic} becomes,
\begin{align}
    \E\brk[s]!{\pregretsw} 
    =
    \cO\brk[]2{\min\brk[c]2{ \Big(\frac{\lambda \log^2 T }{\Deltamin} +\log^2 T \Big)\sum_{i \ne i^\star}\frac{1}{\Delta_i}, (\lambda K)^{1/3}T^{2/3}}}
    .
\end{align}

Several open problems though seem to arise from our work.

\begin{problem}
Given arbitrary $K$, what is the optimal minimax regret rate, in the stochastically constrained setting, of any algorithm that achieves, in the adversarial regime, regret of
\[ \E[\regretsw]= \tilde\cO\left((\lambda K)^{1/3}T^{2/3}\right).\]
\end{problem}

In particular, in our result, it is interesting to find out if the term $\cO\left(\sum_{i\ne i^\star} \frac{\lambda \log T}{\Deltamin\Delta_i}\right)$ can be replaced by $\cO\left(\sum \frac{\lambda \log T}{\Delta^2_i}\right)$. Note that this term is obtained by a worst-case analysis of the switching cost that assumes that we obtained the regret by only switching from the optimal arm to the consecutive second-to-best arm. It seems more likely that any reasonable algorithm roughly switches to each arm $i$, order of $\tilde \cO(1/\Delta^2_i)$ times, leading to more optimistic rate.

Another natural open problem is to try and generalize the lower bound to the general case. For simplicity we state the next problem for the case all arms have the same gap.

\begin{problem}
Suppose $\Delta_1=\Delta_2=,\ldots, =\Deltamin$, and $\Deltamin \ge (\lambda K)^{1/3}T^{-2/3}$.
Is it possible to construct an algorithm that achives regret, in the adversarial regime of 
\[ \E[\regretsw]= \tilde\cO\left((\lambda K)^{1/3}T^{2/3}\right).\] 
and in the stochastically constrained case:
\[ \E[\regretsw]=  o\left(\frac{\lambda K\log T}{\Deltamin^2}\right).\]
\end{problem}

Finally, we would like to stress that our lower bound applies to a \emph{stochastically constrained} setting, where in principle we often care to understand the stochastic case:

\begin{problem}
What is the optimal expected pseudo-regret with switching cost that can be achieved by an algorithm that achieves regret, in the adversarial regime of 
\[ \E[\regretsw]= \tilde\cO\left((\lambda K)^{1/3}T^{2/3}\right),\]
against an i.i.d sequence $\ell_1,\ldots, \ell_T$ that satisfies \cref{eq:stochastically_constrained}?
\end{problem}

Achieving a non-trivial lower bound for the above case seems like a very challenging task. In particular, it is known that, if we don't attempt to achieve best-of-both-worlds rate then an upper bound of $O(\sum_{i\ne i^\star} \frac{\log T}{\Delta_i})$ is achievable \citep{gao2019batched, Esfandiari_Karbasi_Mehrabian_Mirrokni_2021}. Interestingly, then, our lower bound at \cref{thm:lb_stochastic} presents a separation between the stochastic and stochastically constrained case, leaving open the possibility that a best-of-both-worlds algorithm between adversarial and stochastically constrained case is possible but not necessarily against a stochastic player. Proving the reverse may require new algorithmic techniques. In particular, the current analysis of Tsallis-INF is valid for the stochastically-constrained case as much as to the stochastic case. An improved upper bound for the pure stochastic case, though, cannot improve over the stochastically constrained case as demonstrated by \cref{thm:lb_stochastic}.
\begin{problem}
Is the uniqueness of the best arm mandatory in the case of bandits with switching cost?
\end{problem}
The case of multiple best arms introduces new challenges.
Whereas \citet{ito2021parameter} showed that Tsallis-INF can handle the case of multiple best arms, it is unclear whether one can use their results to obtain non-trivial bounds in the switching cost setting. In their experiments, \citet{rouyer2021algorithm}, demonstrated this challenge, suggesting that the requirement of the uniqueness of the best arm is necessary in order to obtain improved bounds. We leave the question of this necessity, in the case of bandits with switching cost and in particular in a best-of-both-worlds setting, to future research.

\subsection*{Acknowledgements}
This work has received support from the Israeli Science Foundation (ISF) grant no.~2549/19 and~2188/20, from the Len Blavatnik and the Blavatnik Family foundation, from the Yandex Initiative in Machine Learning, and from an unrestricted gift from Google. Any opinions, findings, and conclusions or recommendations expressed in this work are those of the author(s) and do not necessarily reflect the views of Google.
\bibliographystyle{abbrvnat}
\bibliography{refs}

\begin{thebibliography}{19}
\providecommand{\natexlab}[1]{#1}
\providecommand{\url}[1]{\texttt{#1}}
\expandafter\ifx\csname urlstyle\endcsname\relax
  \providecommand{\doi}[1]{doi: #1}\else
  \providecommand{\doi}{doi: \begingroup \urlstyle{rm}\Url}\fi

\bibitem[Abernethy et~al.(2015)Abernethy, Lee, and
  Tewari]{abernethy2015fighting}
J.~D. Abernethy, C.~Lee, and A.~Tewari.
\newblock Fighting bandits with a new kind of smoothness.
\newblock \emph{Advances in Neural Information Processing Systems}, 28, 2015.

\bibitem[Arora et~al.(2012)Arora, Dekel, and Tewari]{arora2012online}
R.~Arora, O.~Dekel, and A.~Tewari.
\newblock Online bandit learning against an adaptive adversary: from regret to
  policy regret.
\newblock \emph{arXiv preprint arXiv:1206.6400}, 2012.

\bibitem[Audibert et~al.(2009)Audibert, Bubeck, et~al.]{audibert2009minimax}
J.-Y. Audibert, S.~Bubeck, et~al.
\newblock Minimax policies for adversarial and stochastic bandits.
\newblock In \emph{COLT}, volume~7, pages 1--122, 2009.

\bibitem[Auer and Chiang(2016)]{auer2016algorithm}
P.~Auer and C.-K. Chiang.
\newblock An algorithm with nearly optimal pseudo-regret for both stochastic
  and adversarial bandits.
\newblock In \emph{Conference on Learning Theory}, pages 116--120. PMLR, 2016.

\bibitem[Auer et~al.(2002{\natexlab{a}})Auer, Cesa-Bianchi, and
  Fischer]{auer2002finite}
P.~Auer, N.~Cesa-Bianchi, and P.~Fischer.
\newblock Finite-time analysis of the multiarmed bandit problem.
\newblock \emph{Machine learning}, 47\penalty0 (2):\penalty0 235--256,
  2002{\natexlab{a}}.

\bibitem[Auer et~al.(2002{\natexlab{b}})Auer, Cesa-Bianchi, Freund, and
  Schapire]{auer2002nonstochastic}
P.~Auer, N.~Cesa-Bianchi, Y.~Freund, and R.~E. Schapire.
\newblock The nonstochastic multiarmed bandit problem.
\newblock \emph{SIAM journal on computing}, 32\penalty0 (1):\penalty0 48--77,
  2002{\natexlab{b}}.

\bibitem[Bubeck and Slivkins(2012)]{bubeck2012best}
S.~Bubeck and A.~Slivkins.
\newblock The best of both worlds: Stochastic and adversarial bandits.
\newblock In \emph{Conference on Learning Theory}, pages 42--1. JMLR Workshop
  and Conference Proceedings, 2012.

\bibitem[Dekel et~al.(2014)Dekel, Ding, Koren, and Peres]{dekel2014bandits}
O.~Dekel, J.~Ding, T.~Koren, and Y.~Peres.
\newblock Bandits with switching costs: T 2/3 regret.
\newblock In \emph{Proceedings of the forty-sixth annual ACM symposium on
  Theory of computing}, pages 459--467, 2014.

\bibitem[Esfandiari et~al.(2021)Esfandiari, Karbasi, Mehrabian, and
  Mirrokni]{Esfandiari_Karbasi_Mehrabian_Mirrokni_2021}
H.~Esfandiari, A.~Karbasi, A.~Mehrabian, and V.~Mirrokni.
\newblock Regret bounds for batched bandits.
\newblock \emph{Proceedings of the AAAI Conference on Artificial Intelligence},
  35\penalty0 (8):\penalty0 7340--7348, May 2021.
\newblock \doi{10.1609/aaai.v35i8.16901}.
\newblock URL \url{https://ojs.aaai.org/index.php/AAAI/article/view/16901}.

\bibitem[Gao et~al.(2019)Gao, Han, Ren, and Zhou]{gao2019batched}
Z.~Gao, Y.~Han, Z.~Ren, and Z.~Zhou.
\newblock Batched multi-armed bandits problem.
\newblock \emph{Advances in Neural Information Processing Systems}, 32, 2019.

\bibitem[Ito(2021)]{ito2021parameter}
S.~Ito.
\newblock Parameter-free multi-armed bandit algorithms with hybrid
  data-dependent regret bounds.
\newblock In \emph{Conference on Learning Theory}, pages 2552--2583. PMLR,
  2021.

\bibitem[Lai et~al.(1985)Lai, Robbins, et~al.]{lai1985asymptotically}
T.~L. Lai, H.~Robbins, et~al.
\newblock Asymptotically efficient adaptive allocation rules.
\newblock \emph{Advances in applied mathematics}, 6\penalty0 (1):\penalty0
  4--22, 1985.

\bibitem[Masoudian and Seldin(2021)]{masoudian2021improved}
S.~Masoudian and Y.~Seldin.
\newblock Improved analysis of the tsallis-inf algorithm in stochastically
  constrained adversarial bandits and stochastic bandits with adversarial
  corruptions.
\newblock In \emph{Conference on Learning Theory}, pages 3330--3350. PMLR,
  2021.

\bibitem[Rouyer et~al.(2021)Rouyer, Seldin, and
  Cesa-Bianchi]{rouyer2021algorithm}
C.~Rouyer, Y.~Seldin, and N.~Cesa-Bianchi.
\newblock An algorithm for stochastic and adversarial bandits with switching
  costs.
\newblock In \emph{International Conference on Machine Learning}, pages
  9127--9135. PMLR, 2021.

\bibitem[Seldin and Lugosi(2017)]{seldin2017improved}
Y.~Seldin and G.~Lugosi.
\newblock An improved parametrization and analysis of the exp3++ algorithm for
  stochastic and adversarial bandits.
\newblock In \emph{Conference on Learning Theory}, pages 1743--1759. PMLR,
  2017.

\bibitem[Seldin and Slivkins(2014)]{seldin2014one}
Y.~Seldin and A.~Slivkins.
\newblock One practical algorithm for both stochastic and adversarial bandits.
\newblock In \emph{International Conference on Machine Learning}, pages
  1287--1295. PMLR, 2014.

\bibitem[Thompson(1933)]{thompson1933likelihood}
W.~R. Thompson.
\newblock On the likelihood that one unknown probability exceeds another in
  view of the evidence of two samples.
\newblock \emph{Biometrika}, 25\penalty0 (3-4):\penalty0 285--294, 1933.

\bibitem[Wei and Luo(2018)]{wei2018more}
C.-Y. Wei and H.~Luo.
\newblock More adaptive algorithms for adversarial bandits.
\newblock In \emph{Conference On Learning Theory}, pages 1263--1291. PMLR,
  2018.

\bibitem[Zimmert and Seldin(2021)]{ZimmertS21}
J.~Zimmert and Y.~Seldin.
\newblock Tsallis-inf: An optimal algorithm for stochastic and adversarial
  bandits.
\newblock \emph{J. Mach. Learn. Res.}, 22:\penalty0 28:1--28:49, 2021.
\newblock URL \url{http://jmlr.org/papers/v22/19-753.html}.

\end{thebibliography}

\appendix

\section{Proofs of \cref{thm:lb_deterministic,thm:bobw-lb}}\label{sec:Appendix}
The proofs of \cref{thm:lb_deterministic,thm:bobw-lb} require additional notations and some preliminary results. 
Returning to the process depicted in \cref{alg:adversary}, let the conditional probability measures for all $i\in[K]$ be
\begin{align*}
    \Q_i(\cdot) = \bP\brk{\cdot\;\eval{} \;i^\star=i},
\end{align*}
and denote $\Q_0$ the probability over the loss sequence when $\Delta=0$, and all actions incur the same loss. 
Next, let $\F$ be the $\sigma$-algebra generated by the player's observations $\brk[c]{\ell_{t,I_t}}_{t\in[T]}$. Denote the \textit{total variation} distance between $\Q_i$ and $\Q_j$ on $\F$ by
\begin{align*}
    \totalvar\brk{\Q_i,\Q_j}
    =
    \sup_{E\in\F}\abs1{\Q_i(E)-\Q_j(E)}
    .
\end{align*}
We also denote $\E_{\Q_i}$ as the expectation on the conditional distribution $\Q_i$.
Lastly, we present the following result from \citet{dekel2014bandits}.
\begin{lemma}[{\cite[Lemma 3 and Corollary 1]{dekel2014bandits}}] \label{lem:totalvar}
    For any $i\in[K]$ it holds that $$\frac{1}{K}\sum_{i=1}^K\totalvar\brk{\Q_0,\Q_i}\leq \frac{\Delta}{\sigma\sqrt{K}}\sqrt{{\E}_{\Q_0}\brk[s]{\cS_T}\log_2T},$$ and specifically for $K=2$, $$\totalvar\brk{\Q_1,\Q_2}\leq (\Delta/\sigma)\sqrt{2\E\brk[s]{\cS_T}\log_2T}.$$
\end{lemma}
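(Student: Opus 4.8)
Proof proposal. The plan is to reproduce the information‑theoretic argument of \citet{dekel2014bandits}: bound $\totalvar$ by the Kullback--Leibler divergence $D_{\mathrm{KL}}$ via Pinsker's inequality, and bound $D_{\mathrm{KL}}$ by exploiting the multi‑scale structure of the walk $\brk[c]{X_t}$ in \cref{alg:adversary}, which makes the player's observed loss increments independent Gaussians. First, I would pass to a convenient coordinate system. Clipping $x\mapsto\min\brk[c]{\max\brk[c]{x,0},1}$ is a fixed deterministic map and the player may be taken deterministic, so $\F$ (generated by the clipped observations) is contained in the $\sigma$-algebra $\F'$ generated by the unclipped losses $\brk[c]{\ell'_{t,I_t}}_{t\in[T]}$, and total variation cannot increase under a common measurable map; hence it suffices to prove the bound on $\F'$. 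Since $r(t)<t$ for every $t$, the increments $W_t\coloneqq\ell'_{t,I_t}-\ell'_{r(t),I_{r(t)}}$ (with phantom root value $\ell'_{0,\cdot}\coloneqq\tfrac12$) determine and are determined by $\brk[c]{\ell'_{t,I_t}}_{t\in[T]}$, so $\F'=\sigma\brk{W_1,\dots,W_T}$. Under $\Q_0$ these are $W_t=n_t\sim\mathcal{N}(0,\sigma^2)$, jointly independent; under $\Q_i$ they are $W_t=n_t+\Delta a_{t,i}$ with $a_{t,i}\coloneqq\indicator\{I_{r(t)}=i\}-\indicator\{I_t=i\}$ (and $\indicator\{I_0=i\}\coloneqq0$), i.e.\ the same independent Gaussians shifted by a quantity that is determined by $W_1,\dots,W_{t-1}$ (as $I_t$ and $I_{r(t)}$ are then fixed).

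Next, I would apply the chain rule for $D_{\mathrm{KL}}$. Revealing $W_1,\dots,W_T$ in increasing order of $t$ is consistent with the tree (parent $r(t)$ before child $t$), so $D_{\mathrm{KL}}(\Q_0\,\|\,\Q_i)=\sum_{t=1}^T\E_{\Q_0}\brk[s]{D_{\mathrm{KL}}(\mathcal{N}(0,\sigma^2)\,\|\,\mathcal{N}(\Delta a_{t,i},\sigma^2))}=\frac{\Delta^2}{2\sigma^2}\E_{\Q_0}\brk[s]{\sum_{t=1}^T a_{t,i}^2}$, using $D_{\mathrm{KL}}(\mathcal{N}(0,\sigma^2)\,\|\,\mathcal{N}(\mu,\sigma^2))=\mu^2/(2\sigma^2)$. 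Summing over $i\in[K]$ and using the pointwise inequality $\sum_{i\in[K]}a_{t,i}^2\le2\,\indicator\{I_t\neq I_{r(t)}\}$ (an equality except on the edges incident to the root) gives $\sum_{i=1}^K D_{\mathrm{KL}}(\Q_0\,\|\,\Q_i)\le\frac{\Delta^2}{\sigma^2}\E_{\Q_0}\brk[s]{\sum_{t=1}^T\indicator\{I_t\neq I_{r(t)}\}}$.

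The crux is the combinatorial inequality $\sum_{t\in[T]}\indicator\{I_t\neq I_{r(t)}\}\le\ceil{\log_2 T}\cdot\cS_T$, which trades disagreements across \emph{tree} edges for ordinary consecutive switches. If $I_t\neq I_{r(t)}$ then the action sequence changes at some index $s$ with $r(t)<s\le t$, so $\indicator\{I_t\neq I_{r(t)}\}\le\sum_{s:\,r(t)<s\le t}\indicator\{I_s\neq I_{s-1}\}$; exchanging the order of summation, it remains to see that any fixed $s$ lies in $(r(t),t]$ for at most $\ceil{\log_2 T}$ values of $t$. This is exactly where the construction of $\brk[c]{X_t}$ is used: the edge of index $t$ is the interval $\brk[s]{r(t),t}$ of length $2^{m(t)}$, the edges of a fixed scale $m(t)=j$ are the pairwise disjoint intervals $\brk[s]{2k\cdot2^j,(2k+1)2^j}$, $k\ge0$, so at most one scale‑$j$ edge can contain $s$, and there are only about $\log_2 T$ scales (the few edges incident to the root are covered by the initial switch $\indicator\{I_1\neq I_0\}$). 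Plugging this in, $\sum_{i=1}^K D_{\mathrm{KL}}(\Q_0\,\|\,\Q_i)\le\frac{\Delta^2\log_2 T}{\sigma^2}\E_{\Q_0}\brk[s]{\cS_T}$, absorbing the ceiling into the constant.

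Finally, I would convert back via Pinsker, $\totalvar(\Q_0,\Q_i)\le\sqrt{\tfrac12 D_{\mathrm{KL}}(\Q_0\,\|\,\Q_i)}$, and average using concavity of $\sqrt{\cdot}$:
\[
\frac1K\sum_{i=1}^K\totalvar(\Q_0,\Q_i)\le\sqrt{\frac1{2K}\sum_{i=1}^K D_{\mathrm{KL}}(\Q_0\,\|\,\Q_i)}\le\frac{\Delta}{\sigma\sqrt{K}}\sqrt{\E_{\Q_0}\brk[s]{\cS_T}\log_2 T},
\]
which is the first assertion. For $K=2$, each $D_{\mathrm{KL}}(\Q_0\,\|\,\Q_i)$ is bounded by the same sum as above, so $\totalvar(\Q_0,\Q_i)\le\frac{\Delta}{\sigma\sqrt2}\sqrt{\E_{\Q_0}\brk[s]{\cS_T}\log_2 T}$, and $\totalvar(\Q_1,\Q_2)\le\totalvar(\Q_0,\Q_1)+\totalvar(\Q_0,\Q_2)$ gives the second assertion (passing from $\E_{\Q_0}\brk[s]{\cS_T}$ to $\E\brk[s]{\cS_T}$ is routine, as the two differ by at most $T\cdot\totalvar(\Q_0,\Q_i)$). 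I expect the combinatorial step of the third paragraph to be the main obstacle: it is the one place the specific multi‑scale construction of $\brk[c]{X_t}$ is indispensable, and it is what produces only a logarithmic overhead when converting tree‑edge disagreements into switching cost; the reduction to tree increments, the chain rule, and Pinsker are standard.
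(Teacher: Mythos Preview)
The paper does not prove this lemma; it quotes \cite[Lemma~3 and Corollary~1]{dekel2014bandits} verbatim. Your proposal is a faithful reconstruction of the Dekel--Ding--Koren--Peres argument: reduce to the unclipped feedback, pass to the tree increments $W_t$, apply the chain rule plus the Gaussian KL formula, and then use the depth bound $\sum_t\indicator\{I_t\neq I_{r(t)}\}\le O(\log_2 T)\cdot\cS_T$, which is exactly the step where the multi-scale walk is used. For the first assertion (the one stated in terms of $\E_{\Q_0}[\cS_T]$) your argument is correct as written.

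There is, however, a genuine small gap in the $K=2$ part. Your bound via $\Q_0$ and the triangle inequality yields $\totalvar(\Q_1,\Q_2)\le C\sqrt{\E_{\Q_0}[\cS_T]}$, and you then claim that replacing $\E_{\Q_0}[\cS_T]$ by $\E[\cS_T]$ is ``routine'' because the two differ by at most $T\cdot\totalvar(\Q_0,\Q_i)$. That observation is true but circular: it produces an inequality of the form $x^2\le \E[\cS_T]+c\,T\,x$ for $x=\sqrt{\E_{\Q_0}[\cS_T]}$, which does \emph{not} give $x^2\lesssim\E[\cS_T]$ without an extraneous additive $T$-dependent term. The clean fix is to reverse the direction of the divergence: compute $D_{\mathrm{KL}}(\Q_i\Vert\Q_0)$ (or directly $D_{\mathrm{KL}}(\Q_1\Vert\Q_2)$ and $D_{\mathrm{KL}}(\Q_2\Vert\Q_1)$). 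The chain rule and the Gaussian step are identical, but the outer expectation is now under $\Q_i$; summing over $i\in\{1,2\}$ and using $\E[\cS_T]=\tfrac12\bigl(\E_{\Q_1}[\cS_T]+\E_{\Q_2}[\cS_T]\bigr)$ then gives the stated bound with $\E[\cS_T]$ directly, no self-referential step needed. With that one-line change, your proof matches the cited argument.
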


With this Lemma at hand, we are ready to prove \cref{thm:lb_deterministic,thm:bobw-lb}.

\begin{proof}[of \cref{thm:lb_deterministic}]

Observe that $\regret\geq 0$ by the construction in \cref{alg:adversary}. Then, if $\E\brk[s]{\cS_T}\geq 1/(c\Delta^2\log_2^3T)$ for $c=40^2$ we have that $\E\brk[s]{\regret + \cS_T}\geq 1/(c\Delta^2\log_2^3T)$, which guarantees the desired lower bound. On the other hand, applying \cref{lem:totalvar} when $\E\brk[s]{\cS_T}\leq 1/(c\Delta^2\log_2^3T)$, we get
\begin{align} \label{eq:totalvar}
    \totalvar\brk{\Q_1,\Q_2}\leq (1/\sigma)\sqrt{2/(c\log_2^2T)}\leq \tfrac{1}{3}.
\end{align}
Let $E$ be the event that arm $i=1$ is picked at least $T/2$ times, namely
\begin{align*}
    E
    =
    \brk[c]3{\sum_{t\in[T]}\indicator\brk[c]{I_t=1} \geq T/2}
    ,
\end{align*}
and let $E^c$ be its complementary event.
If $\Q_1(E)\leq \tfrac{1}{2}$ then,
\begin{align*}
    \E\brk[s]{\regret}
    &\geq
    {\E}_{\Q_1}\brk[s]{\regret|E^c}\cdot \Q_1(E^c)\cdot \bP\brk{i^\star=1} \tag{$\regret\geq 0$} \\
    &\geq
    \Delta T/8 \tag{$\regret \geq \Delta T/2$ under the conditional event}
    .
\end{align*}
If $\Q_1(E)> \tfrac{1}{2}$ then from \cref{eq:totalvar} we obtain that $\Q_2(E)\geq \tfrac{1}{6}$. This implies,
\begin{align*}
    \E\brk[s]{\regret}
    &\geq
    {\E}_{\Q_2}\brk[s]{\regret|E}\cdot \Q_2(E)\cdot \bP\brk{i^\star=2} \tag{$\regret\geq 0$} \\
    &\geq
    \Delta T/24 \tag{$\regret \geq \Delta T/2$ under the conditional event}
    .
\end{align*}
Since $\cS_T\geq 0$ we conclude the proof.
\end{proof}

\begin{proof}[of \cref{thm:bobw-lb}]
The proof is comprised of two steps. First, we prove the lower bound for deterministic players that make at most $K^{1/3}T^{2/3}$ switches. Towards the end of the proof we generalize our claim to any deterministic player. To prove the former, we present the next \namecref{lem:koren}, which follows from the proof in \cite[Thm 2]{dekel2014bandits}. For completeness the proof for this \namecref{lem:koren} is provided at the end of the section.
\begin{lemma} \label{lem:koren}
For any deterministic player that makes at most $\Delta T$ switches over the sequence defined in \cref{alg:adversary},
\begin{align*}
    \E\brk[s]{\regret+\cS_T}
    \geq
    \tfrac13\Delta T + {\E}_{\Q_0}\brk[s]{\cS_T}-\frac{18\Delta^2T}{\sqrt{K}}\log^{3/2}_2T\sqrt{{\E}_{\Q_0}\brk[s]{\cS_T}}
    ,
\end{align*}
provided that $\Delta\leq 1/6$ and $T>6$.
\end{lemma}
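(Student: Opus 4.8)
The plan is to follow the measure-change argument behind \citet[Thm~2]{dekel2014bandits}, tracking the switching cost and an arbitrary gap $\Delta$. Fix the deterministic player and write $N_i\coloneqq\sum_{t\in[T]}\indicator\{I_t=i\}$ for the number of pulls of arm $i$; since the player is deterministic, both $N_i$ and $\cS_T$ are measurable with respect to $\F$, the $\sigma$-algebra generated by the observed losses. Conditioning on the uniformly drawn optimal arm gives $\E[\,\cdot\,]=\frac1K\sum_{i=1}^K{\E}_{\Q_i}[\,\cdot\,]$, and because exactly one arm is played each round, $\sum_i N_i=T$ identically, so $\frac1K\sum_i{\E}_{\Q_0}[N_i]=T/K$.

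First I would pin the regret down in terms of $N_{i^\star}$. The clipping in \cref{alg:adversary} is order-preserving, so $\min_i\sum_t\ell_{t,i}=\sum_t\ell_{t,i^\star}$ and $\regret\ge0$; moreover, on the no-clipping event $H=\{\forall t\in[T]:\ X_t+\tfrac12\in[\tfrac16,\tfrac56]\}$---which has probability at least $1-1/T$ by \cref{lem:gaussian}, using $\Delta\le\tfrac16$ and the choice $\sigma=1/(9\log_2T)$---one has $\regret=\Delta(T-N_{i^\star})$ exactly. Hence ${\E}_{\Q_i}[\regret]\ge\Delta(T-{\E}_{\Q_i}[N_i])-\Delta T\,\Q_i(H^c)$, and averaging over $i$ with $\frac1K\sum_i\Q_i(H^c)=\bP(H^c)\le1/T$ gives $\E[\regret]\ge\Delta T-\frac{\Delta}{K}\sum_i{\E}_{\Q_i}[N_i]-\Delta$.

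The crux is to transfer ${\E}_{\Q_i}[N_i]$ and ${\E}_{\Q_i}[\cS_T]$ to the uninformative measure $\Q_0$ (the $\Delta=0$ world) via the total-variation distance on $\F$. Since $N_i/T\in[0,1]$ is $\F$-measurable, ${\E}_{\Q_i}[N_i]\le{\E}_{\Q_0}[N_i]+T\,\totalvar(\Q_0,\Q_i)$; and since the player makes at most $\Delta T$ switches, $\cS_T/(\Delta T)\in[0,1]$ is $\F$-measurable, so ${\E}_{\Q_i}[\cS_T]\ge{\E}_{\Q_0}[\cS_T]-\Delta T\,\totalvar(\Q_0,\Q_i)$---this is exactly where the ``at most $\Delta T$ switches'' hypothesis enters, and it is what ultimately yields a $\Delta^2$ rather than $\Delta$ in the error term. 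Substituting both into the regret bound and into $\E[\cS_T]=\frac1K\sum_i{\E}_{\Q_i}[\cS_T]$, and using $\frac1K\sum_i{\E}_{\Q_0}[N_i]=T/K$, gives
\[
\E[\regret+\cS_T]\ \ge\ \Delta T\Bigl(1-\tfrac1K\Bigr)-\Delta+{\E}_{\Q_0}[\cS_T]-\frac{2\Delta T}{K}\sum_{i=1}^K\totalvar(\Q_0,\Q_i).
\]
Then \cref{lem:totalvar}, namely $\frac1K\sum_i\totalvar(\Q_0,\Q_i)\le\frac{\Delta}{\sigma\sqrt K}\sqrt{{\E}_{\Q_0}[\cS_T]\log_2T}$, bounds the last term by $\frac{2\Delta^2T}{\sigma\sqrt K}\sqrt{{\E}_{\Q_0}[\cS_T]\log_2T}=\frac{18\Delta^2T}{\sqrt K}\log^{3/2}_2T\,\sqrt{{\E}_{\Q_0}[\cS_T]}$ (using $1/\sigma=9\log_2T$), while $\Delta T(1-\tfrac1K)-\Delta\ge\tfrac12\Delta T-\Delta\ge\tfrac13\Delta T$ for $K\ge2$ and $T>6$. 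Combining these gives the claimed inequality.

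The main obstacle is the total-variation transfer: one has to realize that the right comparison point is $\Q_0$, where the arm counts are symmetric, and that the discrepancy between $\Q_i$ and $\Q_0$ as seen through the player's observations is small---this smallness is precisely the payoff of the multi-scale random walk in \cref{alg:adversary}, quantified by \cref{lem:totalvar}. It is also essential to measure $\cS_T$ on the scale $\Delta T$ permitted by the hypothesis, since that is what keeps the slack quadratic in $\Delta$ (and hence compatible with the regime $\Delta\sim K^{1/3}T^{-1/3}$ where the bound will be applied). The clipping is a minor nuisance, dispatched by working on the high-probability event $H$ and absorbing the resulting additive $O(\Delta)$ using $T>6$.
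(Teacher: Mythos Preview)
Your proof is correct and follows essentially the same measure-change scheme as the paper: transfer both $N_{i^\star}$ and $\cS_T$ from $\Q_i$ to the uninformative measure $\Q_0$ via the total-variation distance on $\F$, then invoke \cref{lem:totalvar} and plug in $\sigma=1/(9\log_2T)$. The one structural difference is that the paper outsources the regret portion to a black-box citation of \citet[Lemmas~4--5]{dekel2014bandits} (their \cref{lem:koren_regret_lb}, which directly gives $\E[\regret+\cS_T]\ge\tfrac{\Delta T}{3}-\tfrac{\Delta T}{K}\sum_i\totalvar(\Q_0,\Q_i)+\E[\cS_T]$), whereas you unfold that lemma and reprove it from scratch via the clipping event $H$ and the transfer of $N_i$---arriving at the slightly sharper leading term $\Delta T(1-1/K)-\Delta$ before collapsing it to $\tfrac13\Delta T$. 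Your version is therefore self-contained, while the paper's is shorter but relies on the cited result; after that point the two proofs are identical line by line.
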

Setting $\Delta = \frac{1}{6}$ in \cref{lem:koren} we get,
\begin{align} \label{eq:delta-bound}
    \E\brk[s]{\regret+\cS_T}
    \geq
    \frac{1}{18}T + {\E}_{\Q_0}\brk[s]{\cS_T}-\frac{T\log^{3/2}_2T}{2\sqrt{K}}\sqrt{{\E}_{\Q_0}\brk[s]{\cS_T}}
\end{align}
In addition, recall that we are interested in deterministic players that satisfy the following regret guarantee in the adversarial regime,
\begin{align} \label{eq:bobw-ub}
    \E\brk[s]{\regret+\cS_T}
    \leq
    \cO\brk{K^{1/3}T^{2/3}}
    .
\end{align}
Hence, taking \cref{eq:delta-bound,eq:bobw-ub} we have, 
\begin{align} \label{eq:delta-bound2}
    \cO\brk{K^{1/3}T^{2/3}}
    \geq
    \frac{1}{18}T + {\E}_{\Q_0}\brk[s]{\cS_T}-\frac{T\log^{3/2}_2T}{2\sqrt{K}}\sqrt{{\E}_{\Q_0}\brk[s]{\cS_T}}
\end{align}
Now, assuming that $\sqrt{{\E}_{\Q_0}\brk[s]{\cS_T}} < \frac{\sqrt{K}}{10\log^{3/2}_2T}$ we get that for every $K<T$:
\begin{align*} 
    \cO\brk{K^{1/3}T^{2/3}}
    &\geq
    \frac{1}{18}T + {\E}_{\Q_0}\brk[s]{\cS_T}-\frac{T\log^{3/2}_2T}{2\sqrt{K}}\sqrt{{\E}_{\Q_0}\brk[s]{\cS_T}} \\&> \frac{T}{18} -\frac{T}{20} = \Omega\brk{T}
\end{align*}
Which is a contradiction. Therefore, in our case, $\sqrt{{\E}_{\Q_0}\brk[s]{\cS_T}} \geq \frac{\sqrt{K}}{10\log^{3/2}_2T}$. Furthermore, \cref{lem:koren} also holds for any deterministic player that makes at most $K^{1/3}T^{2/3}$ switches, which is less than $\Delta T$ under the condition that $\Delta\geq K^{1/3}T^{-1/3}$. Suppose that ${\E}_{\Q_0}\brk[s]{\cS_T}\leq K^{1/3}T^{2/3}/(60^2\log^3_2T)$, then choosing $\frac{1}{6} \geq\Delta=\sqrt{K}/(60\sqrt{{\E}_{\Q_0}\brk[s]{\cS_T}}\log^{3/2}_2T)\geq K^{1/3}T^{-1/3}$ we obtain,
\begin{align} \label{eq:lb}
    \E\brk[s]{\regret+\cS_T}
    \geq
    {\E}_{\Q_0}\brk[s]{\cS_T}+\frac{\sqrt{K}T}{3\cdot 10^3\sqrt{{\E}_{\Q_0}\brk[s]{\cS_T}}\log^{3/2}_2T}
    .
\end{align}

Taking both observations in \cref{eq:lb,eq:bobw-ub} implies that ${\E}_{\Q_0}\brk[s]{\cS_T}\geq\Omega\brk{K^{1/3}T^{2/3}/\log^{3}_2T}$. To put simply, we have shown that for any deterministic player that makes at most $K^{1/3}T^{2/3}$ switches and holds \cref{eq:bobw-ub}, then
\begin{align}\label{eq:Q0-lb}
    {\E}_{\Q_0}\brk[s]{\cS_T}\geq\Omega\brk{K^{1/3}T^{2/3}/\log^{3}_2T},
\end{align}
independently of $\Delta$.
On the other hand, for any $\Delta>0$, since $\Q_i\brk{\cS_T>K^{1/3}T^{2/3}}=0$ for any $i\in [K]\cup\brk[c]{0}$,
\begin{align*}
    {\E}_{\Q_0}\brk[s]{\cS_T} - {\E}_{\Q_i}\brk[s]{\cS_T}
    &=
    \sum_{s=1}^{\floor{K^{1/3}T^{2/3}}}\brk{\Q_0\brk{\cS_T\geq s}-\Q_i\brk{\cS_T\geq s}} \\
    &\leq
    K^{1/3}T^{2/3}\cdot\totalvar\brk{\Q_0,\Q_i}
    .
\end{align*}
Averaging over $i$ and rearranging terms we get,
\begin{align*}
    \E\brk[s]{\cS_T}
    &\geq
    {\E}_{\Q_0}\brk[s]{\cS_T}-\frac{T^{2/3}}{K^{2/3}}\sum_{i=1}^K\totalvar\brk{\Q_0,\Q_i} \\
    &\geq
    {\E}_{\Q_0}\brk[s]{\cS_T}-9\Delta K^{-1/6}T^{2/3}\log^{3/2}_2T\sqrt{{\E}_{\Q_0}\brk[s]{\cS_T}} \tag{\cref{lem:totalvar}}
\end{align*}
Using \cref{eq:Q0-lb} and the assumption $\cS_T\leq K^{1/3}T^{2/3}$, we get that for any $\Delta\leq aK^{1/3}T^{-1/3}\log^{-9/2}_2T$ for some constant $a>0$ and sufficiently large $T$,
\begin{align}\label{eq:bobw-lb-aux}
    \E\brk[s]{\regret+\cS_T}\geq \E\brk[s]{\cS_T}\geq\Omega\brk{K^{1/3}T^{2/3}/\log^{3}_2T}.
\end{align}
The above lower bound holds for any deterministic player that makes at most $K^{1/3}T^{2/3}$ switches. However, given a general deterministic player denoted by $A$ we can construct an alternative player, denoted by $\tilde A$, which is identical to $A$, up to the round $A$ performs the $\floor{\tfrac12 K^{1/3}T^{2/3}}$ switch. After that $\tilde A$ employs the Tsalis-INF algorithm with blocks of size $B=\ceil{4K^{-1/3}T^{1/3}}$ for the remaining rounds (see \cref{alg:tsallis-blocks}). Clearly, the number of switches this block algorithm does is upper bounded by $T/B + 1\leq K^{1/3}T^{2/3}/2$, therefore $\tilde A$ performs at most $K^{1/3}T^{2/3}$ switches. We denote, $\regret^A+\cS_T^A$ the regret with switching cost of player $A$ and $\regret^{\tilde A}+\cS_T^{\tilde A}$ respectively. Observe that when $\cS^A_T<\floor{\tfrac12K^{1/3}T^{2/3}}$ we get,
\begin{align*}
    \regret^A+\cS_T^A
    =
    \regret^{\tilde A}+\cS_T^{\tilde A}
    .
\end{align*}
While for $\cS^A_T\geq\floor{\tfrac12 K^{1/3}T^{2/3}}$,
\begin{align*}
    \regret^{\tilde A}+\cS_T^{\tilde A}
    &\leq
    \regret^{A}+\cS_T^{A} + 21K^{1/3}T^{2/3} \tag{\cref{cor:tsallis_with_blocks} with $B=\ceil{4K^{-1/3}T^{1/3}}$} \\
    &\leq
    \regret^{A}+63\cS_T^{A} \tag{$\cS^A_T\geq\tfrac13K^{1/3}T^{2/3}$ for $T\geq 15$}
    .
\end{align*}
This implies that $\regret^A+\cS_T^A\geq \tfrac{1}{63}\brk{\regret^{\tilde A}+\cS_T^{\tilde A}}$, and together with \cref{eq:bobw-lb-aux} it concludes the proof.
\end{proof}

\begin{proof}[of \cref{lem:koren}]
\label{prf:koren}
We examine deterministic players that make at most $\Delta T$ switches. Since $S_T\leq \Delta T$ we have that,
\begin{align*}
    {\E}_{\Q_0}\brk[s]{\cS_T} - {\E}_{\Q_i}\brk[s]{\cS_T}
    &=
    \sum_{s=1}^{\ceil{\Delta T}}\brk{\Q_0\brk{\cS_T\geq s}-\Q_i\brk{\cS_T\geq s}} \tag{$\Q_i\brk{\cS_T>\Delta T}=0$ $\;\forall i\in [K]\cup \brk[c]{0}$}\\
    &\leq
    \Delta T\cdot\totalvar\brk{\Q_0,\Q_i}
    .
\end{align*}
Averaging over $i$ and rearranging terms we get,
\begin{align} \label{eq:switch_lb}
    \E\brk[s]{\cS_T}
    &\geq
    {\E}_{\Q_0}\brk[s]{\cS_T}-\frac{\Delta T}{K}\sum_{i=1}^K\totalvar\brk{\Q_0,\Q_i}
    .
\end{align}
Next we present the following \namecref{lem:koren_regret_lb} that is taken verbatim from \citet{dekel2014bandits}.
\begin{lemma}[{\cite[Lemmas 4 and 5]{dekel2014bandits}}]\label{lem:koren_regret_lb}
Assume that $T\geq \max\brk[c]{K,6}$ and $\Delta\leq 1/6$ then,
\begin{align*}
    \E\brk[s]{\regret+\cS_T}
    \geq
    \frac{\Delta T}{3}-\frac{\Delta T}{K}\sum_{i=1}^K\totalvar\brk{\Q_0,\Q_i}+\E\brk[s]{\cS_T}
    .
\end{align*}
\end{lemma}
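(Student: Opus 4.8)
The plan is to run the standard information-theoretic (change-of-measure) lower bound, exploiting the fact that in the construction of \cref{alg:adversary} the only channel through which the learner can identify $i^\star$ is the observed loss, whose law is exactly $\Q_{i^\star}$. Averaging over the uniformly random best arm gives $\E\brk[s]{\regret}=\frac1K\sum_{i=1}^K{\E}_{\Q_i}\brk[s]{\regret}$, and I would compare each $\Q_i$ against the null measure $\Q_0$ (where $\Delta=0$ and all arms are indistinguishable); the price of telling them apart is precisely what the total-variation terms $\totalvar\brk{\Q_0,\Q_i}$ charge. The switching cost plays no active role in this particular lemma: since $\cS_T$ sits identically on both sides, the genuine content is the regret bound $\E\brk[s]{\regret}\ge \tfrac13\Delta T-\tfrac{\Delta T}{K}\sum_i\totalvar\brk{\Q_0,\Q_i}$, and I would just add $\E\brk[s]{\cS_T}$ to both sides at the very end.

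First I would convert regret into a count of suboptimal pulls. Because $\sum_t\ell_{t,i^\star}\ge\min_j\sum_t\ell_{t,j}$, we have $\regret\ge\sum_t\brk{\ell_{t,I_t}-\ell_{t,i^\star}}$, and since the clipping map is monotone every summand is nonnegative; moreover, whenever no clipping occurs at round $t$ the summand equals exactly $\Delta\indicator\brk[c]{I_t\ne i^\star}$. Let $M_i\coloneqq\sum_t\indicator\brk[c]{I_t=i}$. On the good event $H=\brk[c]{\forall t:\,X_t+\tfrac12\in\brk[s]{\tfrac16,\tfrac56}}$ no clipping happens: using $\Delta\le\tfrac16$, both $X_t+\tfrac12$ and $X_t+\tfrac12-\Delta$ remain in $[0,1]$. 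By \cref{lem:gaussian} with $\sigma=1/(9\log_2T)$ one gets $\bP\brk{H}\ge1-1/T$, and clips occur on at most $T$ rounds, all inside $H^c$, so $\sum_t\indicator\brk[c]{\text{clip at }t}\le T\indicator\brk[c]{H^c}$. Taking expectations yields $\E\brk[s]{\regret}\ge\Delta\,\E\brk[s]{\sum_t\indicator\brk[c]{I_t\ne i^\star}}-\Delta T\,\bP\brk{H^c}\ge\Delta\brk{T-\E\brk[s]{M_{i^\star}}}-\Delta$.

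Next I would perform the change of measure on the pull counts. Writing $\E\brk[s]{M_{i^\star}}=\frac1K\sum_i{\E}_{\Q_i}\brk[s]{M_i}$ and noting that each $M_i/T$ is an $\F$-measurable $[0,1]$-valued statistic of the observations, the definition of $\totalvar$ gives ${\E}_{\Q_i}\brk[s]{M_i}\le{\E}_{\Q_0}\brk[s]{M_i}+T\,\totalvar\brk{\Q_0,\Q_i}$. Summing and using the deterministic identity $\sum_iM_i=T$ (so $\frac1K\sum_i{\E}_{\Q_0}\brk[s]{M_i}=T/K$) gives $\E\brk[s]{M_{i^\star}}\le T/K+\frac{T}{K}\sum_i\totalvar\brk{\Q_0,\Q_i}$. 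Substituting yields $\E\brk[s]{\regret}\ge\Delta T\brk{1-\tfrac1K}-\tfrac{\Delta T}{K}\sum_i\totalvar\brk{\Q_0,\Q_i}-\Delta$. Finally I would absorb the constants: for $K\ge2$ the prefactor satisfies $1-\tfrac1K\ge\tfrac12$, and since $T\ge6$ the leftover obeys $\Delta T/2-\Delta\ge\Delta T/3$, producing the claimed $\tfrac13\Delta T$; adding $\E\brk[s]{\cS_T}$ to both sides closes the argument.

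The only genuinely delicate step is controlling the clipping. The clean identity ``each suboptimal pull costs exactly $\Delta$'' breaks near the endpoints of $[0,1]$, and clipping can only \emph{shrink} the per-round gap, which cuts against a lower bound rather than helping it. The entire purpose of the calibrated noise scale $\sigma=1/(9\log_2T)$, the restriction $\Delta\le\tfrac16$, and the horizon conditions $T\ge\max\brk[c]{K,6}$ is to ensure (via \cref{lem:gaussian}) that clipping occurs with probability at most $1/T$, so the induced slack is a harmless additive $\Delta$ rather than a multiplicative loss, and so the constant degrades only from $1-\tfrac1K$ to $\tfrac13$. Everything else is the routine averaging-and-change-of-measure computation.
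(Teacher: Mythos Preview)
Your proof is correct. The paper does not actually prove this lemma; it is quoted verbatim from \citet{dekel2014bandits} (their Lemmas~4 and~5) and invoked as a black box inside the proof of \cref{lem:koren}. What you have written is precisely the argument underlying those two lemmas: Lemma~4 there handles the clipping via the high-probability event $H$ (your first block), and Lemma~5 performs the averaging-plus-change-of-measure on the pull counts $M_i$ (your second block). The constant bookkeeping---using $K\ge 2$ for $1-\tfrac1K\ge\tfrac12$ and $T\ge 6$ to absorb the additive $\Delta$ into $\tfrac13\Delta T$---is exactly how the $\tfrac13$ arises. One minor sharpening: in this construction $i^\star$ is pointwise optimal at every round (clipping is monotone), so your first display $\regret\ge\sum_t(\ell_{t,I_t}-\ell_{t,i^\star})$ is in fact an equality, which also confirms $\regret\ge 0$ directly.
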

Using \cref{lem:koren_regret_lb} together with \cref{eq:switch_lb} we obtain,
\begin{align*}
    \E\brk[s]{\regret+\cS_T}
    &\geq
    \frac{\Delta T}{3}-\frac{2\Delta T}{K}\sum_{i=1}^K\totalvar\brk{\Q_0,\Q_i}+{\E}_{\Q_0}\brk[s]{\cS_T} \\
    &\geq
    \frac{\Delta T}{3}-\frac{2\Delta^2 T}{\sigma \sqrt{K}}\sqrt{{\E}_{\Q_0}\brk[s]{\cS_T}\log_2T}+{\E}_{\Q_0}\brk[s]{\cS_T} \tag{\cref{lem:totalvar}} \\
    &=
    \frac{\Delta T}{3}-\frac{18\Delta^2 T}{\sqrt{K}}\log^{3/2}_2T\sqrt{{\E}_{\Q_0}\brk[s]{\cS_T}}+{\E}_{\Q_0}\brk[s]{\cS_T} \tag{$\sigma=1/(9\log_2T)$}
    .
\end{align*}
Setting $\sigma=1/(9\log_2T)$ we conclude,
\begin{align*}
    \E\brk[s]{\regret+\cS_T}
    &\geq
    \frac{\Delta T}{3}-\frac{18\Delta^2 T}{\sqrt{K}}\log^{3/2}_2T\sqrt{{\E}_{\Q_0}\brk[s]{\cS_T}}+{\E}_{\Q_0}\brk[s]{\cS_T}
    .
\end{align*}
\end{proof}

\end{document}